\newtheorem{lemma}{Lemma}
\newtheorem{theorem}{Theorem}
\newtheorem{definition}{Definition}
\newtheorem{property}{Property}
\renewcommand{\exp}[1]{\operatorname{exp}\left(#1\right)}
\newcommand{\E}[1]{\mathbb{E}\left[#1\right]}
\newcommand{\1}[1]{\mathbb{I}\left\{#1\right\}}
\long\def\comment#1{}
\title{Multinomial Random Forest: Toward Consistency and Privacy-Preservation}
\author{%
	Yiming Li$^{1, }$\thanks{equal contribution.} , Jiawang Bai$^{1, 2, *}$, Jiawei Li$^{1}$, Yang Xue$^1$, Yong Jiang$^{1, 2}$, Chun Li$^3$, Shutao Xia$^{1,2}$\\
	$^1$Tsinghua Shenzhen International Graduate School,  Tsinghua University, China\\
	$^2$PCL Research Center of Networks and Communications, Peng Cheng Laboratory, China\\
	$^3$Institute for Computational Biology, Case Western Reserve University, Ohio, USA\\
	\texttt{xueyang.swjtu@gmail.com}; \texttt{xiast@sz.tsinghua.edu.cn}\\
}
\begin{document}

\maketitle

\begin{abstract}
Despite the impressive performance of random forests (RF), its theoretical properties have not been thoroughly understood. In this paper, we propose a novel RF framework, dubbed multinomial random forest (MRF), to analyze the \emph{consistency} and \emph{privacy-preservation}. Instead of deterministic greedy split rule or with simple randomness, the MRF adopts two impurity-based multinomial distributions to randomly select a split feature and a split value respectively. Theoretically, we prove the consistency of the proposed MRF and analyze its privacy-preservation within the framework of differential privacy. We also demonstrate with multiple datasets that its performance is on par with the standard RF. To the best of our knowledge, MRF is the first consistent RF variant that has comparable performance to the standard RF.
\end{abstract}

\section{Introduction}

Random forest (RF) \cite{Breiman2001} is a popular type of ensemble learning method. Because of its excellent performance and fast yet efficient training process, the standard RF and its several variants have been widely used in many fields, such as computer vision \cite{cootes2012,kontschieder2016} and data mining \cite{bifet2009,xiong2012}. However, due to the inherent bootstrap randomization and the highly greedy data-dependent construction process, it is very difficult to analyze the theoretical properties of random forests \cite{biau2012}, especially for the \emph{consistency}. Since consistency ensures that the model goes to optimal under a sufficient amount of data, this property is especially critical in this big data era.

To address this issue, several RF variants \cite{breiman2004,biau2008,genuer2012,biau2012,denil2014,Wang2018} were proposed. Unfortunately, all existing consistent RF variants suffer from relatively poor performance compared with the standard RF due to two mechanisms introduced for ensuring consistency. On the one hand, the data partition process allows only half of the training samples to be used for the construction of tree structure, which significantly reduces the performance of consistent RF variants. On the other hand, extra randomness ($e.g.$, Poisson or Bernoulli distribution) is introduced, which further hinders the performance. Accordingly, those mechanisms introduced for theoretical analysis makes it difficult to eliminate the performance gap between consistent RF and standard RF.

\begin{figure}[ht]
 \centering
 \vspace{-0.5em}
 \includegraphics[width=0.48\textwidth]{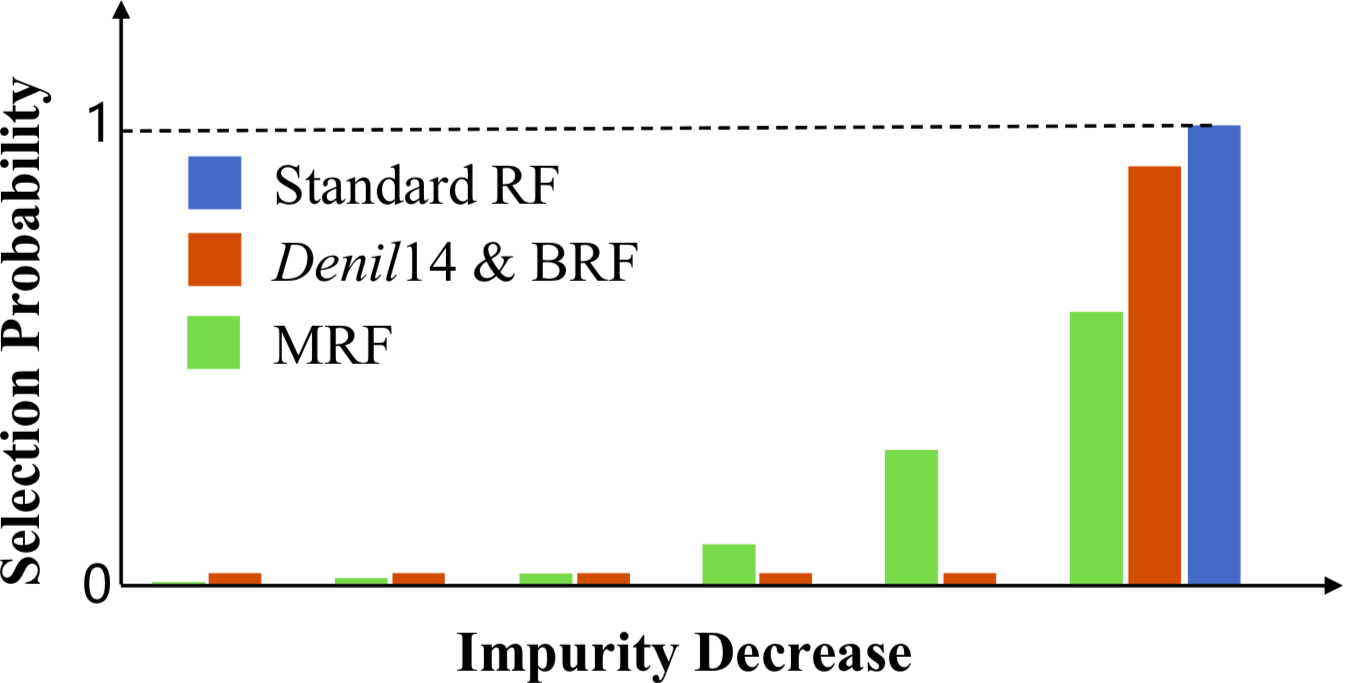}
 \caption{Splitting criteria of different RFs. For standard RF, it always chooses the split point with the highest impurity decrease. For {\it Denil14} and BRF, they choose the split point also in a greedy way mostly, while holding a small or even negligible probability in selecting another point randomly. The selection probability in MRF is positively related to the impurity decrease. All randomness in consistent RF variants is aiming to fulfill the consistency, whereas the one in MRF is more reasonable.}
 \label{prob}
\vspace{-0.5em}
\end{figure}

Is this gap really impossible to fill? In this paper, we propose a novel consistent RF framework, dubbed multinomial random forest (MRF), by introducing the randomness more reasonably, as shown in Figure \ref{prob}. In the MRF, two impurity-based multinomial distributions are used as the basis for randomly selecting a split feature and a specific split value respectively. Accordingly, the ``best'' split point has the highest probability to be chosen, while other candidate split points that are nearly as good as the ``best'' one will also have a good chance to be selected. This randomized splitting process is more reasonable and makes up the accuracy drop with almost no extra computational complexity. Besides, the introduced impurity-based randomness is essentially an exponential mechanism satisfying differential privacy, and the randomized prediction of each tree proposed in this paper also adopts the exponential mechanism. Accordingly, we can also analyze the privacy-preservation of MRF under the differential privacy framework. To the best of our knowledge, this privacy-preservation property, which is important since the training data may well contains sensitive information, has never been analyzed by previous consistent RF variants.  

The main contributions of this work are three-fold: {\bf 1)} we propose a novel multinomial-based method to improve the greedy split process of decision trees; {\bf 2)} we propose a new random forests variant, dubbed multinomial random forest (MRF), based on which we analyze its consistency and privacy-preservation; {\bf 3)} extensive experiments demonstrate that the performance of MRF is on par with Breiman's original RF and is better than all existing consistent RF variants. MRF is the first consistent RF variant that simultaneously has performance comparable to the standard RF.

\section{Related Work}
\subsection{Consistent Random Forests}
Random forest \cite{Breiman2001} is a distinguished ensemble learning algorithm inspired by the random subspace method \cite{ho1998} and random split selection \cite{dietterich2000}. The standard decision trees are built upon bootstrap datasets and splitting with the CART methodology \cite{breiman1984}. Its various variants, such as quantile regression forests \cite{meinshausen2006} and deep forests \cite{zhou2017}, were proposed and used in a wide range of applications \cite{bifet2009, cootes2012,kontschieder2016} for their effective training process and great interpretability. Despite the widespread use of random forests in practice, theoretical analysis of their success has yet been fully established. Breiman \cite{Breiman2001} showed the first theoretical result indicating that the generalization error is bounded by the performance of individual tree and the diversity of the whole forest. After that, the relationship between random forests and a type of nearest neighbor-based estimator was also studied \cite{lin2006}.

One of the important properties, the consistency, has yet to be established for random forests. Consistency ensures that the result of RF converges to the optimum as the sample size increases, which was first discussed by Breiman \cite{breiman2004}. As an important milestone, Biau \cite{biau2008} proved the consistency of two directly simplified random forest. Subsequently, several consistent RF variants were proposed for various purposes; for example, random survival forests \cite{Ishwaran2010}, an online version of random forests variant \cite{denil2013} and a generalized regression forests \cite{athey2019}. Recently, Haghiri  \cite{haghiri2018} proposed CompRF, whose split process relied on triplet comparisons rather than information gain. To ensure the consistency, \cite{biau2012} suggested that an independent dataset is needed to fit in the leaf. This approach is called data partition. Under this framework, Denil \cite{denil2014} developed a consistent RF variant (called {\it Denil14} in this paper) narrowing the gap between theory and practice. Following {\it Denil14}, Wang \cite{Wang2018} introduced Bernoulli random forests (BRF), which reached state-of-the-art performance. The comparison of different consistent RFs is included in the \textbf{Appendix}.

Although several consistent RF variants are proposed, due to the relatively poor performance compared with standard RF, how to fulfill the gap between theoretical consistency and the performance in practice is still an important open problem.

\subsection{Privacy-Preservation}
In addition to the exploration of consistency, some schemes \cite{MohammedCFY11, PatilS14} were also presented to address privacy concerns. Among those schemes, differential privacy \cite{Dwork06}, as a new and promising privacy-preservation model, has been widely adopted in recent years. In what follows, we outline the basic content of differential privacy. 

Let $\mathcal{D}=\{(\bm{X}_{i},Y_{i})\}_{i=1}^{n}$ denotes a dataset consisting of $n$ $i.i.d.$ observations, where $\bm{X}_{i} \in \mathbb{R}^D$ represents $D$-dimensional features and $Y_{i} \in \{1, \ldots, K\}$ indicates the label.  Let $\mathcal{A}=\{A_{1}, A_{2},\ldots,  A_{D}\}$ represents the feature set. The formal definition of differential privacy is detailed as follow.

\begin{definition}[\textbf{$\epsilon$-Differential Privacy}]
A randomized mechanism $\mathcal{M}$ gives $\epsilon$-differential privacy for every set of outputs $O$ and any neighboring datasets $\mathcal{D}$ and $\mathcal{D}'$ differing in one record, if $\mathcal{M}$ satisfies: 
\begin{equation}\label{DP}
\Pr[\mathcal{M}(\mathcal{D})\in O]\leqslant \exp\epsilon\cdot \Pr[\mathcal{M}(\mathcal{D'})\in O],
\end{equation}
\end{definition}
where $\epsilon$ denotes the privacy budget that restricts the privacy guarantee level of $\mathcal{M}$. A smaller $\epsilon$ represents a stronger privacy level. 

Currently, two basic mechanisms are widely used to guarantee differential privacy: the Laplace mechanism \cite{DworkMNS06} and the Exponential mechanism \cite{McSherryT07}, where the former one is suitable for numeric queries and the later is suitable for non-numeric queries. Since the MRF mainly involves selection operations, we adopt the exponential mechanism to preserve privacy. 

\begin{definition}[Exponential Mechanism]
Let $q: (\mathcal{D}, o)\rightarrow\mathbb{R}$ be a score function of dataset $\mathcal{D}$ that measures the quality of output $o\in O$. The exponential mechanism $\mathcal{M}(\mathcal{D})$ satisfies $\epsilon$-differential privacy, if it outputs $o$ with probability proportional to $\exp {\frac{\epsilon q(\mathcal{D}, o)}{2\triangle q}}$, $i.e.$, 
\begin{equation}\label{EM}
\Pr[\mathcal{M}(\mathcal{D})=o]=\frac{\exp {\frac{\epsilon q(\mathcal{D}, o)}{2\triangle q}}}{\sum_{o'\in O}\exp {\frac{\epsilon q(\mathcal{D}, o')}{2\triangle q}}},
\end{equation}
\end{definition}
where $\triangle q$ is the sensitivity of the quality function, defined as 
\begin{equation}\label{sensitivity}
\triangle q=\max_{\forall o, \mathcal{D}, \mathcal{D'}}\left|q(\mathcal{D}, o)-q(\mathcal{D'}, o)\right|.
\end{equation}

\section{Multinomial Random Forests}\label{class}

\subsection{Training Set Partition}\label{TDSP}
Compared to the standard RF, the MRF replaces the bootstrap technique by a partition of the training set, which is necessary for consistency, as suggested in \cite{biau2012}. Specifically, to build a tree, the training set $\mathcal{D}$ is divided randomly into two non-overlapping subsets $\mathcal{D}^S$ and $\mathcal{D}^E$, which play different roles. One subset $\mathcal{D}^S$ will be used to build the structure of a tree; we call the observations in this subset the \textbf{structure points}. Once a tree is built, the labels of its leaves will be re-determined on the basis of another subset $\mathcal{D}^E$; we call the observations in the second subset the \textbf{estimation points}. The ratio of two subsets is parameterized by \textbf{partition rate} = $|\mathrm{Structure\, points}|/|\mathrm{Estimation\, points}|$. To build another tree, the training set is re-partitioned randomly and independently.

\subsection{Tree Construction}\label{TC}
The construction of a tree relies on a recursive partitioning algorithm. Specifically, to split a node, we introduce two impurity-based multinomial distributions: one for split feature selection and another for split value selection. The specific split point is a pair of a split feature and a split value. In the classification problem, the impurity decrease at a node $u$ caused by a split point $v$ is defined as
\begin{equation}\label{1}
    I(\mathcal{D}_{u}^S, v) = T(\mathcal{D}_{u}^S) - \frac{|\mathcal{D}_{u}^{S_l}|}{|\mathcal{D}_{u}^S|}T(\mathcal{D}_{u}^{S_l})
    -\frac{|\mathcal{D}_{u}^{S_r}|}{|\mathcal{D}_{u}^S|}T(\mathcal{D}_{u}^{S_r}),
\end{equation}
where $\mathcal{D}_{u}^S$ is the subset of $\mathcal{D}^S$ at a node $u$, $\mathcal{D}_{u}^{S_l}$ and $\mathcal{D}_{u}^{S_r}$ generated by splitting $\mathcal{D}_{u}^S$ with $v$, are two subsets in the left child and right child of the node $u$, respectively, and $T(\cdot)$ is the impurity criterion ($e.g.$, Shannon entropy or Gini index).  Unless other specification, we ignore the subscript $u$ of each symbol, and use $I$ to denote $I(\mathcal{D}_{u}^S, v)$ for shorthand in the rest of this paper. 

Let $V=\{v_{ij}\}$ denote the set of all possible split points for the node and $I_{i,j}$ is the corresponding impurity decrease, where $v_{ij}$ is $i$-th value on the $j$-th feature.  In what follows, we first introduce the feature selection mechanism for a node, and then describe the split value selection mechanism corresponding to the selected feature. 

\textbf{$M(\bm{\phi})$-based split feature selection.} At first, we obtain a vector $\bm{I} =\left(I_1,\cdots, I_D\right) =\left(\max \limits_{i}\{I_{i,1}\}, \cdots, \max \limits_{i}\{I_{i,D}\}\right)$ based on each $I_{i,j}$, where  $\max \limits_{i}\{I_{i,j}\}$,  $j=1,\ldots, D$, is the largest possible impurity decrease of the feature $A_{j}$. Then, the following three steps need to be performed: 
\begin{itemize}
  \item Normalize $\bm{I}$: $\hat{\bm{I}} = \left(\frac{I_1-\min \bm{I}}{\max \bm{I} - \min \bm{I}}, \cdots, \frac{I_D-\min \bm{I}}{\max \bm{I} - \min \bm{I}}\right)$;
  \item Compute the probabilities $\bm{\phi}=(\phi_{1}, \cdots, \phi_{D})=\text{softmax}(\frac{B_1}{2}\hat{\bm{I}})$, where $B_1 \geq 0$ is a hyper-parameter related to privacy budget;
  \item Randomly select a feature according to the multinomial distribution $M(\bm{\phi})$.
\end{itemize}

%We first normalize $\bm{I}$ as $\hat{\bm{I}} = \left(\frac{I_1-\min \bm{I}}{\max \bm{I} - \min \bm{I}}, \cdots, \frac{I_D-\min \bm{I}}{\max \bm{I} - \min \bm{I}}\right)$, and then compute the probabilities
%$\bm{\phi_1}=(\phi_1^{(1)}, \cdots, \phi_1^{(D)})=\text{softmax}(B_1\hat{\bm{I}})$, where $B_1 \geq 0$ is a hyper-parameter related to differential privacy. Finally, a feature is randomly selected according to the multinomial distribution $M(\bm{\phi}_1)$.

\textbf{$M(\bm{\varphi})$-based split value selection.} After selecting the feature $A_{j}$ for a node, we need to determine the corresponding split value to construct two children. Suppose $A_{j}$ has $m$ possible split values, we need to perform the following steps:  
\begin{itemize}
 \item Normalize $\bm{I}^{(j)}=(I_{1,j}, \cdots, I_{m,j})$ as $\hat{\bm{I}}^{(j)} = \left(\frac{I_{1,j}-\min \bm{I}^{(j)}}{\max \bm{I}^{(j)} - \min \bm{I}^{(j)}}, \cdots, \frac{I_{m,j}-\min \bm{I}^{(j)}}{\max \bm{I}^{(j)} - \min \bm{I}^{(j)}}\right)$, where $j$ identifies the feature $A_{j}$;
\item Compute the probabilities $\bm{\varphi}=(\varphi_{1}, \cdots, \varphi_{m})=\text{softmax}(\frac{B_2}{2}\hat{\bm{I}}^{(j)})$, where $B_2 \geq 0$ is another hyper-parameter related to privacy budget;
\item Randomly select a split value according to the multinomial distribution $M(\bm{\varphi})$.
\end{itemize}

We repeat the above processes to split nodes until the stopping criterion is met. The stopping criterion relates to the minimum leaf size $k$. Specifically, the number of estimation points is required to be at least $k$ for every leaf. The pseudo code of training process is shown in the \textbf{Appendix}.

\subsection{Prediction}

Once a tree $h$ was grown based on $\mathcal{D}^S$, we re-determine the predicted values for leaves according to $\mathcal{D}^E$. Similar to \cite{Breiman2001},  given an unlabeled sample $\bm{x}$, we can easily know which leaf of $h$ it falls, and the empirical probability that sample $\bm{x}$ has label $c$ ($c\in \{1,\cdots,K\}$) is estimated to be
\begin{equation}\label{p1}
\eta^{(c)} (\bm{x}) = \frac{1}{|\mathcal{N}_h^E(\bm{x})|} \sum_{(\bm{X},Y) \in \mathcal{N}_h^E(\bm{x})} \1{Y = c},
\end{equation}
where $\mathcal{N}_h^E(\bm{x})$ is the set of estimation points in the leaf containing $\bm{x}$, and $\mathbb{I}(\cdot)$ is an indicator function. 

In contrast to the standard RF and consistent RF variants, the predicted label $h(\bm{x})$ of $\bm{x}$ is randomly selected with a probability proportional to $\exp {\frac{B_{3}\eta^{(c)} (\bm{x})}{2}}$, where $B_{3}\geq 0$ is also related to the privacy budget. The final prediction of the MRF is the majority vote over all the trees, which is the same as the one used in \cite{Breiman2001}:
\begin{equation}
\label{p3}
\overline{\hat{y}} = \overline{h^{(M)}(\bm{x})} = \arg \max_c \sum_{i} \1{h^{(i)}(\bm{x}) = c}.
\end{equation}

\section{Consistency and Privacy-Preservation Theorem}\label{CC}
In this section, we theoretically analyze the consistency and privacy-preservation of the MRF. All omitted proofs are shown in the \textbf{Appendix}.

\subsection{Consistency}
%We first describe the definition of consistency and two previously proven lemmas. We then state two new lemmas and the consistency theorem for the MRF.
\subsubsection{Preliminaries}

\begin{definition}
When the dataset $\mathcal{D}$ is given, for a certain distribution of $(\bm{X},Y)$, a sequence of classifiers $\{h\}$ are consistent if the error probability $L$ satisfies
$$
\mathbb{E}(L) = \Pr(h(\bm{X},Z,\mathcal{D}) \neq Y)\rightarrow L^{*},
$$
where $L^{*}$ denotes the Bayes risk, $Z$ denotes the randomness involved in the construction of the tree, such as the selection of candidate features.

\end{definition}

\begin{lemma}
\label{vote}
The voting classifier $\overline{h^{(M)}}$ which takes the majority vote over $M$ copies of $h$ with different randomizing variables has consistency if those classifiers $\{h\}$ have consistency.
\end{lemma}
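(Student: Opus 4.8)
The plan is to control the excess risk of the voting rule by that of a single randomized tree through one elementary pointwise comparison, and then to invoke the assumed consistency of $\{h\}$. Throughout I condition on the training set $\mathcal{D}$ and write $\mu^{(c)}(\bm{x}) = \Pr(Y = c \mid \bm{X} = \bm{x})$ for the true class posterior, $c^{*}(\bm{x}) = \arg\max_{c}\mu^{(c)}(\bm{x})$ for the Bayes label, and $\Delta^{(c)}(\bm{x}) = \mu^{(c^{*}(\bm{x}))}(\bm{x}) - \mu^{(c)}(\bm{x}) \geq 0$ for the pointwise regret of predicting $c$. For any (possibly randomized) rule $g$ whose internal randomness is independent of $(\bm{X},Y)$, the standard regret identity for the $0$--$1$ loss reads
\[
\mathbb{E}(L_{g}) - L^{*} = \mathbb{E}_{\bm{X}}\Big[\textstyle\sum_{c=1}^{K}\Pr\big(g(\bm{X}) = c \mid \bm{X}\big)\,\Delta^{(c)}(\bm{X})\Big].
\]
I will apply this twice: once to a single tree $h$, with $p^{(c)}(\bm{x}) = \Pr_{Z}(h(\bm{x},Z,\mathcal{D}) = c)$, and once to the majority-vote rule $\overline{h^{(M)}}$, with $\pi^{(c)}(\bm{x}) = \Pr_{Z_{1},\dots,Z_{M}}(\overline{h^{(M)}}(\bm{x}) = c)$.

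The heart of the argument is the pointwise bound $\pi^{(c)}(\bm{x}) \leq M\,p^{(c)}(\bm{x})$. To see it, note that conditionally on $\bm{X} = \bm{x}$ the $M$ copies use independent randomizing variables $Z_{1},\dots,Z_{M}$, so the votes $h(\bm{x},Z_{1}),\dots,h(\bm{x},Z_{M})$ are i.i.d.\ draws from the categorical law $(p^{(1)}(\bm{x}),\dots,p^{(K)}(\bm{x}))$; let $N_{c}$ be the number of votes for class $c$, so that $N_{c}\sim\mathrm{Bin}(M,p^{(c)}(\bm{x}))$. Since $\sum_{c}N_{c}=M\geq 1$, whenever the plurality rule (with any fixed tie-breaking among the top classes) returns $c$ we must have $N_{c}\geq 1$; hence $\{\overline{h^{(M)}}(\bm{x}) = c\}\subseteq\{N_{c}\geq 1\}$ and
\[
\pi^{(c)}(\bm{x}) \leq \Pr\big(N_{c}\geq 1\big) = 1-\big(1-p^{(c)}(\bm{x})\big)^{M} \leq M\,p^{(c)}(\bm{x}),
\]
the last step being Bernoulli's inequality.

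Combining the two displays and using $\Delta^{(c)}\geq 0$ gives, pointwise in $\bm{x}$ and hence after integrating over $\bm{X}$ and averaging over $\mathcal{D}$,
\[
\mathbb{E}(L_{\overline{h^{(M)}}}) - L^{*} = \mathbb{E}_{\bm{X}}\Big[\textstyle\sum_{c}\pi^{(c)}(\bm{X})\,\Delta^{(c)}(\bm{X})\Big] \leq M\,\mathbb{E}_{\bm{X}}\Big[\textstyle\sum_{c}p^{(c)}(\bm{X})\,\Delta^{(c)}(\bm{X})\Big] = M\big(\mathbb{E}(L_{h}) - L^{*}\big).
\]
By the assumed consistency of $\{h\}$ the right-hand side tends to $0$ as the sample size grows, so $\mathbb{E}(L_{\overline{h^{(M)}}})\to L^{*}$, which is exactly consistency of the voting classifier.

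The step I expect to be the main obstacle --- or at least the one that is easy to overcomplicate --- is the passage to a finite number $M$ of votes in the multiclass setting. For $K=2$ one is tempted to write the vote-error probability as a binomial tail $\Pr(\mathrm{Bin}(M,p)\geq M/2)$ and to bound it by controlling the tail near $p=1/2$, which is delicate and does not extend cleanly to plurality voting over $K>2$ classes. The observation that rescues the proof is that no sharp tail bound is needed at all: the crude inclusion $\{\overline{h^{(M)}}=c\}\subseteq\{N_{c}\geq 1\}$ already yields the linear estimate $\pi^{(c)}\leq M\,p^{(c)}$, which is loose but more than sufficient, since consistency only requires the excess risk to vanish rather than to vanish at any particular rate. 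Two minor points still need care: fixing a tie-breaking convention for the $\arg\max$ so that a class with zero votes is never output (automatic once $M\geq 1$), and observing that the whole chain is valid conditionally on $\mathcal{D}$, so that the final $\mathbb{E}_{\mathcal{D}}$ is legitimate and matches the definition of $\mathbb{E}(L)$.
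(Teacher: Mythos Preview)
Your proof is correct. Note, however, that the paper does not give its own proof of this lemma; it simply cites Biau, Devroye and Lugosi (2008) and uses the result as a black box. So the relevant comparison is between your argument and the standard one in that reference.

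The two arguments share the same skeleton --- the pointwise regret identity $\mathbb{E}(L_{g})-L^{*}=\mathbb{E}_{\bm{X}}\big[\sum_{c}\Pr(g(\bm{X})=c\mid\bm{X})\,\Delta^{(c)}(\bm{X})\big]$ followed by a pointwise inequality relating the vote's class probabilities to the single tree's --- but they use different inclusions. Biau et al.\ observe that if the plurality winner differs from $c^{*}(\bm{x})$ then at least $M/2$ of the trees fail to vote $c^{*}(\bm{x})$, which after the regret identity yields $\mathbb{E}(L_{\overline{h^{(M)}}})-L^{*}\leq 2\big(\mathbb{E}(L_{h})-L^{*}\big)$, a factor independent of $M$. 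You instead use the cruder inclusion $\{\overline{h^{(M)}}=c\}\subseteq\{N_{c}\geq 1\}$, obtaining the factor $M$. Your bound is looser quantitatively, but it has two pedagogical advantages: it handles the multiclass plurality case directly without any detour through ``at least half the trees disagree with Bayes,'' and it makes explicit that nothing sharp about binomial tails is needed --- exactly the point you flag in your last paragraph. Either route suffices for consistency, since $M$ is fixed and only the vanishing of the right-hand side matters.
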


\begin{lemma}\label{l4}
Consider a partitioning classification rule building a prediction by a majority vote method in each leaf node. If the labels of the voting data have no effect on the structure of the classification rule, then $\E{L} \to L^*$ as $n \to \infty$ provided that
\begin{enumerate}
\item The diameter of $\mathcal{N}(\bm{X})\rightarrow 0$ as $n \to \infty$ in probability,
\item $|\mathcal{N}^E(\bm{X})| \to \infty$ as $n \to \infty$ in probability,
\end{enumerate}
where $\mathcal{N}(\bm{X})$ is the leaf containing $\bm{X}$, $|\mathcal{N}^E(\bm{X})|$ is the number of estimation points in $\mathcal{N}(\bm{X})$.
\label{devroye61-c}
\end{lemma}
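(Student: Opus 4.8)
The plan is to follow the classical route for proving consistency of partitioning classifiers (the Stone / Devroye--Györfi--Lugosi argument), letting the data-splitting hypothesis play the role that independence of the partition plays in the textbook case. First I would reduce the statement about the $0$--$1$ error to a statement about the estimated posteriors. Writing $\hat\eta^{(c)}(\bm{x})$ for the leaf label frequency of equation \eqref{p1} and $\eta^{(c)}(\bm{x})=\Pr(Y=c\mid \bm{X}=\bm{x})$ for the true posterior, the majority-vote rule in each leaf is exactly the plug-in rule $g(\bm{x})=\arg\max_c \hat\eta^{(c)}(\bm{x})$, whose excess risk obeys the standard bound
\[
\E{L}-L^*\;\le\;2\sum_{c=1}^{K}\E{\bigl|\hat\eta^{(c)}(\bm{X})-\eta^{(c)}(\bm{X})\bigr|}.
\]
So it suffices to prove $\E{|\hat\eta^{(c)}(\bm{X})-\eta^{(c)}(\bm{X})|}\to 0$ for each class $c$.

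The second step is to cash in the hypothesis that the voting labels do not influence the tree structure. Let $\mathcal{G}$ be the $\sigma$-field generated by the structure points $\mathcal{D}^S$ together with the feature vectors $\{\bm{X}_i : (\bm{X}_i,Y_i)\in\mathcal{D}^E\}$ of the estimation points. By hypothesis the leaf $\mathcal{N}(\bm{x})$ and its occupancy $|\mathcal{N}^E(\bm{x})|$ are $\mathcal{G}$-measurable, and conditional on $\mathcal{G}$ the estimation labels are independent with $\Pr(Y_i=c\mid\mathcal{G})=\eta^{(c)}(\bm{X}_i)$. Setting $\bar\eta^{(c)}(\bm{x})=\E{\hat\eta^{(c)}(\bm{x})\mid\mathcal{G}}$, i.e. the average of $\eta^{(c)}$ over the estimation points of the leaf, I split the error into a variance term and a bias term,
\[
\hat\eta^{(c)}(\bm{x})-\eta^{(c)}(\bm{x})=\underbrace{\bigl(\hat\eta^{(c)}(\bm{x})-\bar\eta^{(c)}(\bm{x})\bigr)}_{\text{variance}}+\underbrace{\bigl(\bar\eta^{(c)}(\bm{x})-\eta^{(c)}(\bm{x})\bigr)}_{\text{bias}}.
\]

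The two terms are then controlled separately. For the variance term, conditionally on $\mathcal{G}$ the estimate $\hat\eta^{(c)}(\bm{x})$ is an average of $N:=|\mathcal{N}^E(\bm{x})|$ independent indicators, so $\E{(\hat\eta^{(c)}-\bar\eta^{(c)})^2\mid\mathcal{G}}\le 1/(4N)$; since $N\to\infty$ in probability by condition~2 and all quantities lie in $[0,1]$, dominated convergence gives $\E{|\hat\eta^{(c)}(\bm{X})-\bar\eta^{(c)}(\bm{X})|}\to 0$. For the bias term I would approximate $\eta^{(c)}$ in $L_1(\mu)$, with $\mu$ the law of $\bm{X}$, by a uniformly continuous $\tilde\eta$ (continuous functions are dense in $L_1(\mu)$); the continuous part vanishes because condition~1 forces every estimation point in the leaf to lie within a shrinking diameter of $\bm{X}$, while the $L_1$ approximation error is transferred from the centre $\bm{X}$ to the cell average by a Lebesgue-density / shrinking-cell argument. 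Combining the two bounds yields $\E{|\hat\eta^{(c)}(\bm{X})-\eta^{(c)}(\bm{X})|}\to 0$ and hence $\E{L}\to L^*$.

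I expect the bias term to be the main obstacle: for a general, possibly discontinuous $\eta^{(c)}$ one cannot argue pointwise, and the delicate part is showing that the error committed by replacing $\eta^{(c)}$ with a continuous surrogate stays controlled after averaging over the (data-dependent) cell, not merely at its centre. The conditioning on $\mathcal{G}$ is precisely what defuses this difficulty, since it freezes the partition and renders it independent of the voting labels, thereby reducing the situation to the fixed-partition setting in which the classical Stone-type consistency theorem applies; the diameter and occupancy hypotheses, conditions~1 and~2, are then exactly the assumptions of that theorem.
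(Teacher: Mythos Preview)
The paper does not supply its own proof of this lemma: it is quoted as a known result from Devroye, Gy\"orfi and Lugosi (reference \texttt{devroye2013} in the text), and the appendix of omitted proofs begins with Lemma~3. Your proposal is a faithful reconstruction of the standard argument from that source---the plug-in excess-risk bound, the bias/variance split via conditioning on the partition, the $1/(4N)$ variance bound, and the $L_1$-density approximation for the bias---so there is nothing in the paper to compare it against beyond the citation itself.

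One small remark on your conditioning $\sigma$-field $\mathcal{G}$: including the feature vectors of $\mathcal{D}^E$ in $\mathcal{G}$ makes the variance step clean but leaves the bias step with a frozen \emph{empirical} average $\frac{1}{N}\sum_i\eta^{(c)}(\bm{X}_i)$ over deterministic points, and the ``transfer'' of the $L_1$ approximation error from the centre to this empirical cell average then needs an extra law-of-large-numbers step. In the present paper the partition is built from $\mathcal{D}^S$ alone, which is fully independent of $\mathcal{D}^E$; conditioning only on $\mathcal{D}^S$ (and the auxiliary randomness) already freezes the partition while keeping the estimation sample fresh and i.i.d., so the bias step reduces directly to the fixed-partition case without the extra manoeuvre. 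Either route works, but the smaller $\sigma$-field is a little tidier here.
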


Lemma \ref{vote} \cite{biau2008} states that the consistency of individual trees leads the consistency of the forest. Lemma \ref{devroye61-c} \cite{devroye2013} implies that the consistency of a tree can be ensured as $n\rightarrow\infty$, every hypercube at a leaf is sufficiently small but still contains infinite number of estimation points.

\subsubsection{Sketch Proof of the Consistency}

In general, the proof of consistency has three main steps: \textbf{(1)} each feature has a non-zero probability to be selected, \textbf{(2)} each split reduces the expected size of the split feature, and \textbf{(3)} split process can go on indefinitely. We first propose two lemmas for step \textbf{(1)} and \textbf{(2)} respectively, and then the consistency theorem of the MRF.

\begin{lemma}\label{P1}
  In the MRF, the probability that any given feature $A$ is selected to split at each node has lower bound $P_1>0$.
\end{lemma}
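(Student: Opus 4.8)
The plan is to argue directly from the closed form of the selection probability produced by the $M(\bm{\phi})$ mechanism, and to show that the normalization step caps how concentrated this distribution can become. By construction the probability that a given feature $A_j$ is chosen at a node is the softmax coordinate
\begin{equation*}
\phi_j = \frac{\exp{\tfrac{B_1}{2}\hat{I}_j}}{\sum_{k=1}^{D}\exp{\tfrac{B_1}{2}\hat{I}_k}},
\end{equation*}
so the task reduces to producing a strictly positive constant, independent of the node and of the data, that lower-bounds every such $\phi_j$.

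First I would exploit the normalization: since $\hat{\bm{I}}$ is the affine image of $\bm{I}$ that sends $\min\bm{I}$ to $0$ and $\max\bm{I}$ to $1$, each normalized coordinate lies in $[0,1]$. Together with the constraint $B_1\ge 0$, this confines every exponent to $\tfrac{B_1}{2}\hat{I}_j\in[0,\tfrac{B_1}{2}]$; this boundedness is precisely what prevents the softmax from collapsing onto a single feature. I would then bound the numerator below by $\exp{0}=1$ and the denominator above by $D\exp{\tfrac{B_1}{2}}$, yielding the uniform estimate
\begin{equation*}
\phi_j \ge \frac{1}{D\exp{\tfrac{B_1}{2}}} = \frac{\exp{-\tfrac{B_1}{2}}}{D},
\end{equation*}
and I would set $P_1$ equal to the right-hand side, which is strictly positive because $B_1$ and $D$ are fixed finite quantities. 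Since this bound depends on neither the node nor the data subset, it holds uniformly over all nodes, exactly as the statement requires.

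The one point I would treat carefully is the degenerate case $\max\bm{I}=\min\bm{I}$, in which the normalization formula is not literally defined. Here all features attain the same largest impurity decrease, so the intended convention is a uniform choice with $\phi_j=1/D$, which still dominates $P_1$; I would record this explicitly so that the single constant $P_1=\exp{-\tfrac{B_1}{2}}/D$ governs every node. I do not anticipate any genuine difficulty beyond this bookkeeping: the entire argument rests on the observation that mapping the impurities into $[0,1]$ with a fixed temperature $B_1$ keeps the feature-selection distribution bounded away from the degenerate all-mass-on-one outcome. This non-vanishing selection probability is the property that feeds into the subsequent consistency steps, where every feature must retain a chance of being split as $n\to\infty$.
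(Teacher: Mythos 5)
Your proposal is correct and follows essentially the same route as the paper: both arguments use that the normalized impurities lie in $[0,1]$ so every softmax exponent is confined to $[0,\tfrac{B_1}{2}]$, which bounds each selection probability away from zero; the paper just identifies the exact worst-case configuration $\hat{\bm{I}}=(0,1,\dots,1)$ to get the marginally tighter constant $\tfrac{1}{1+(D-1)e^{B_1}}$, whereas your cruder bound $\tfrac{1}{D}e^{-B_1/2}$ serves the same purpose. Your explicit treatment of the degenerate case $\max\bm{I}=\min\bm{I}$ is a small point of care the paper omits.
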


\begin{lemma}\label{P2}
  Suppose that features are all supported on $[0,1]$. In the MRF, once a split feature $A$ is selected, if this feature is divided into
  $N (N \geq 3)$ equal partitions $A^{(1)}, \cdots, A^{(N)}$ from small to large ($i.e.$, $A^{(i)}=\left[\frac{i-1}{N},\frac{i}{N} \right]$), for any split point $v$,
$$
\exists P_2\,(P_2>0), \mathrm{s.t.}\, \Pr\left(v\in \bigcup_{i=2}^{N-1} A^{(i)} | A\right)\geq P_2.
$$
\end{lemma}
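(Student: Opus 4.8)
The plan is to exploit the fact that every normalized impurity decrease $\hat{I}_{i,j}$ lies in $[0,1]$, which forces the softmax law $M(\bm{\varphi})$ to be close to uniform over the candidate split values, and then to show that a fixed fraction of those candidates falls in the middle band $\bigcup_{i=2}^{N-1}A^{(i)}$. First I would record sandwich bounds for the softmax weights. Since $0\le \hat{I}_{i,j}\le 1$ for the selected feature $A_j$, we have $1\le \exp{\tfrac{B_2}{2}\hat{I}_{i,j}}\le \exp{\tfrac{B_2}{2}}$, and the normalizing sum over all $m$ candidates lies in $\left[m,\, m\exp{\tfrac{B_2}{2}}\right]$. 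Combining these gives, for every $i$,
\[
\frac{\exp{-B_2/2}}{m}\;\le\;\varphi_i\;\le\;\frac{\exp{B_2/2}}{m},
\]
so $M(\bm{\varphi})$ assigns each candidate value at least $\exp{-B_2/2}$ times the uniform probability $1/m$.

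Summing the lower bound over the candidates whose split value lies in the middle band then yields
\[
\Pr\!\left(v\in\bigcup_{i=2}^{N-1}A^{(i)}\,\Big|\,A\right)\;=\;\sum_{i:\,v_{i,j}\in \mathrm{mid}}\varphi_i\;\ge\;\exp{-B_2/2}\cdot\frac{m_{\mathrm{mid}}}{m},
\]
where $m_{\mathrm{mid}}$ is the number of candidate split values inside $\bigcup_{i=2}^{N-1}A^{(i)}$. The remaining step is to bound the ratio $m_{\mathrm{mid}}/m$ below by a positive constant independent of the sample size. This is where the hypothesis that the feature is supported on all of $[0,1]$ enters: the feature distribution places positive mass $\rho$ on the middle band $\left[\tfrac1N,\tfrac{N-1}{N}\right]$, whose complement is only the two outer strips of total width $\tfrac2N$. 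Because the $m$ candidate values are the midpoints between consecutive sorted structure points, a positive fraction of them lands inside the band; quantitatively I would argue that $m_{\mathrm{mid}}/m\to\rho$ up to an $O(1)$ correction from at most two straddling midpoints, so that $m_{\mathrm{mid}}/m\ge \rho/2$ with probability tending to one once the node is not too small. Setting $P_2=\tfrac{\rho}{2}\exp{-B_2/2}>0$, a constant independent of $n$, then completes the proof.

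The step I expect to be the main obstacle is exactly this lower bound on $m_{\mathrm{mid}}/m$: the candidate values are data-dependent and their number $m$ can grow without bound, so a per-value estimate of order $1/m$ is useless, and one genuinely needs the \emph{fraction} of candidates in the middle band to stay bounded away from zero. The near-uniformity established in the first step is what rescues the argument, since it converts the empirical count ratio into a selection probability at the sole cost of the multiplicative factor $\exp{-B_2/2}$; the softmax structure together with the $[0,1]$ normalization is therefore essential, and I would not expect the statement to survive if the impurities entered the distribution unnormalized.
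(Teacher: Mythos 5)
Your proposal follows essentially the same route as the paper's proof: sandwich the softmax weights between constant multiples of the uniform distribution over the $m$ candidates (the paper uses the cruder min/max per-candidate bounds $\frac{1}{1+(m-1)e^{B_2}}\le\Pr(v)\le\frac{e^{B_2}}{e^{B_2}+(m-1)}$, whose ratio tends to $e^{-2B_2}$, where you use the tighter $e^{\pm B_2/2}/m$), and then multiply by the fraction of candidates falling in the middle band. The one place you diverge is exactly the step you flag as the obstacle: the paper disposes of $m_{\mathrm{mid}}/m$ by writing the selection probability as an integral of a ``density'' over the feature interval and substituting the per-candidate bounds as constants, which silently assumes the candidates are spread with uniform density so that the fraction in the band is $(N-2)/N$; this yields the clean constant $P_2=\frac{N-2}{N}e^{-2B_2}$. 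Your replacement---arguing that $m_{\mathrm{mid}}/m\ge\rho/2$ because the feature law puts mass $\rho$ on the band---is more honest about where the difficulty lies, but it delivers strictly less than the lemma asserts: $\rho$ is the mass of the \emph{conditional} feature distribution at the current node (after rescaling to $[0,1]$), hence node-dependent and not obviously bounded away from zero down the tree, and your bound holds only ``with probability tending to one'' rather than as the deterministic constant $P_2$ the statement requires. So the gap you identify is real, but it is present in the paper's own proof as well; neither argument rigorously controls the spatial distribution of the data-dependent candidate split values.
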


Lemma \ref{P1} states that the MRF fulfills the first aforementioned requirement. Lemma \ref{P2} states that second condition is also met by showing that the specific split value has a large probability that it is not near the two endpoints of the feature interval.

\begin{theorem}
\label{th1}
Suppose that $\bm{X}$ is supported on $[0,1]^D$ and have non-zero density almost everywhere, the cumulative distribution function of the split points is right-continuous at 0 and left-continuous at 1. If $B_3 \rightarrow \infty$, MRF is consistent when $k \rightarrow \infty$ and $k/n \rightarrow 0$ as $n \rightarrow \infty$.
\end{theorem}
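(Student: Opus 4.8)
The plan is to reduce the forest statement to a single tree and then verify the two geometric conditions of Lemma~\ref{devroye61-c}. By Lemma~\ref{vote}, the majority-vote forest is consistent as soon as an individual randomized tree $h$ is, so I would focus entirely on a single tree. To invoke Lemma~\ref{devroye61-c} I first check its structural hypothesis, that the labels used for voting do not affect the partition: this holds by construction, since the tree is grown only on the structure points $\mathcal{D}^S$ while the leaf labels are re-estimated on the independent estimation points $\mathcal{D}^E$ (Section~\ref{TDSP}). The only subtlety is that each tree outputs the \emph{randomized} exponential-mechanism label, drawn with probability proportional to $\exp{B_3\eta^{(c)}(\bm{x})/2}$, rather than a hard vote. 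Here the hypothesis $B_3\to\infty$ is exactly what I would use: as $B_3\to\infty$ this distribution concentrates on $\arg\max_c \eta^{(c)}(\bm{x})$, which recovers the within-leaf majority-vote rule assumed by Lemma~\ref{devroye61-c}.

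It then remains to verify the two conditions. Condition~2, $|\mathcal{N}^E(\bm{X})|\to\infty$ in probability, I would read directly off the stopping rule: every leaf is required to hold at least $k$ estimation points, so $|\mathcal{N}^E(\bm{X})|\ge k$, and the hypothesis $k\to\infty$ closes it.

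Condition~1, $\diam(\mathcal{N}(\bm{X}))\to 0$ in probability, is the substantive step and is where Lemmas~\ref{P1} and~\ref{P2} enter, together with the hypothesis $k/n\to 0$. The strategy is to show that every coordinate is cut infinitely often along the root-to-leaf path of $\bm{X}$ and that each such cut contracts the cell in that coordinate by a factor bounded away from $1$. Because the estimation sample grows linearly in $n$ (the partition rate is fixed) while the minimum leaf size is negligible, $k/n\to 0$, the tree must keep subdividing, so the number of splits on the path to $\bm{X}$'s leaf diverges in probability. Fixing a coordinate $j$, Lemma~\ref{P1} gives that $j$ is selected at each node with probability at least $P_1>0$, so along an unboundedly long path coordinate $j$ is chosen infinitely often; conditioned on $j$ being chosen, Lemma~\ref{P2} puts the split value in the interior partitions $\bigcup_{i=2}^{N-1}A^{(i)}$ with probability at least $P_2>0$, so with probability at least $P_1P_2$ the side length along $j$ shrinks by a factor of at most $(N-1)/N$. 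A law-of-large-numbers / Borel--Cantelli argument over the diverging number of cuts then drives the $j$-th side length to $0$; the non-zero-density assumption on $\bm{X}$ and the endpoint-continuity of the split-point CDF are what let me convert ``split in an interior partition'' into genuine metric contraction and rule out cuts accumulating at the boundaries $0$ and $1$. A union bound over the finitely many coordinates $j=1,\dots,D$ then yields $\diam(\mathcal{N}(\bm{X}))\to 0$, and Lemma~\ref{devroye61-c} gives $\E{L}\to L^*$.

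I expect the main obstacle to be Condition~1, and specifically the coupling between the random split sizes and the data-dependent, random path length. Lemmas~\ref{P1} and~\ref{P2} supply only per-node conditional lower bounds, so the delicate part is making the ``infinitely many effective cuts'' argument rigorous when the cuts are dependent and the path length is itself random; controlling the boundary behaviour via the continuity hypotheses, and establishing the contraction in probability rather than almost surely, is where the real care is required.
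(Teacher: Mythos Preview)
Your outline follows the paper's proof closely: reduce to a single tree via Lemma~\ref{vote}, use $B_3\to\infty$ to recover majority vote so that Lemma~\ref{devroye61-c} applies, read Condition~2 off the stopping rule $|\mathcal{N}^E(\bm{X})|\ge k$, and for Condition~1 combine Lemmas~\ref{P1}--\ref{P2} into a geometric contraction $\mathbb{E}[V_m(a)]\le(1-\tfrac{1}{N}P_1P_2)^m$ together with a proof that the depth $m\to\infty$. Where you use an LLN/Borel--Cantelli phrasing the paper simply bounds the expected side length, but these are the same idea.

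The one place you should revise is the role you assign to the non-zero-density and CDF-continuity hypotheses. You invoke them to ``convert `split in an interior partition' into genuine metric contraction and rule out cuts accumulating at the boundaries''; but Lemma~\ref{P2} already delivers the interior-split bound $P_2>0$ unconditionally, so the contraction step does not need them. In the paper they are used precisely at the step you left vague, namely ``the tree must keep subdividing because $k/n\to 0$''. That sentence is not enough on its own: one must guarantee that every cell at depth $m$ still carries enough estimation points for the stopping rule to fail. The paper argues as follows. First, the endpoint continuity of the split-point CDF ensures that after $m$ splits the smallest child has Lebesgue size at least some $\sigma>0$ with probability close to $1$. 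Second, the non-zero density converts positive Lebesgue size into a positive $\mu_{\bm{X}}$-measure $p>0$ for every depth-$m$ cell. Third, Chebyshev's inequality on the Binomial$(n,p)$ count of estimation points, together with $k/n\to 0$, gives $\Pr(|\mathcal{N}^E(\bm{X})|<k)\to 0$, so the split at depth $m$ proceeds and $m\to\infty$ in probability. Relocating the two hypotheses to this depth argument (rather than to the contraction) is exactly what is needed to close the gap you flagged at the end of your proposal.
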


\subsection{Privacy-Preservation}\label{PP}
In this part, we prove that the MRF satisfies $\epsilon$-differential privacy based on two composition properties \cite{McSherry10}. Suppose we have a set of privacy mechanisms $\mathcal{M}=\{\mathcal{M}_{1}, \ldots, \mathcal{M}_{p}\}$ and each $\mathcal{M}_{i}$ provides $\epsilon_{i}$ privacy guarantee, then the sequential composition and parallel composition are described as follows: 

\begin{property}[Sequential Composition]\label{SC}
Suppose $\mathcal{M}=\{\mathcal{M}_{1}, \ldots, \mathcal{M}_{p}\}$ are sequentially performed on a dataset $\mathcal{D}$, then $\mathcal{M}$ will provide $(\sum_{i=1}^{p}\epsilon_{i})$-differential privacy.
\end{property}

\begin{property}[Parallel Composition]\label{PC}
Suppose $\mathcal{M}=\{\mathcal{M}_{1}, \ldots, \mathcal{M}_{p}\}$ are performed on a disjointed subsets of the entire dataset, $i.e.$, $\{\mathcal{D}_{1}, \ldots, \mathcal{D}_{p}\}$, respectively, then $\mathcal{M}$ will provide $(\max\{\epsilon_{i}\}_{i=1}^{p})$-differential privacy.
\end{property}

\begin{table*}[ht]
\caption{Accuracy (\%) of different RFs on benchmark UCI datasets.}
\label{ClassAcc}
\begin{center}
\small
\begin{sc}
\centering
\begin{tabular}{lcccc|cc}
\hline
% non
{Dataset} & {\it Denil14} & BRF & CompRF-C & \textbf{MRF} &CompRF-I & {\it Breiman} \\ \hline
Zoo & 80.00 & 85.00 & 87.69 & $\textbf{90.64}^{\dagger}$  &93.39 & 87.38 \\
Hayes & 50.93 & 45.35 & 45.82 & $\textbf{79.46}^{\dagger}$ &46.04 & 77.58 \\
Echo & 78.46 & 88.46 & 89.63 & $\textbf{91.72}^{\dagger}$ &88.09 & 90.64 \\
Hepatitis & 62.17 & 63.46 & 62.50 & \textbf{64.32} &58.33 & 64.05 \\
Wdbc & 92.86 & 95.36 & 92.39 & \textbf{95.78} &94.26 & 96.01 \\
Transfusion & 72.97 & 77.70 & 76.53 & \textbf{78.53} &75.28 & $79.52^{\bullet}$ \\
Vehicle & 68.81 & 71.67 & 59.68 & \textbf{73.54} &64.86 & $74.70^{\bullet}$ \\
Mammo & 79.17 & 81.25 & 76.57 & \textbf{81.86} &78.72 & $82.31^{\bullet}$ \\
Messidor & 65.65 & 65.21 & 65.62 & \textbf{67.14} &66.14 & $68.35^{\bullet}$ \\
Website & 85.29 & 85.58 & 85.98 & $\textbf{89.80}^{\dagger}$ &88.34 & 88.12 \\
Banknote & 98.29 & 98.32 & 99.36 & $\textbf{99.49}^{\dagger}$ &99.02 & 99.12 \\
Cmc & 53.60 & 54.63 & 53.93 & $\textbf{56.12}^{\dagger}$ &54.61 & 55.11 \\
Yeast & 58.80 & 58.38 & 14.15 & \textbf{61.03} &10.66 & 61.71 \\
Car & 88.02 & 93.43 & 79.07 & \textbf{96.30} &92.17 & $97.42^{\bullet}$ \\
Image & 95.45 & 96.06 & 93.99 & \textbf{97.47} &96.16 & 97.71 \\
Chess & 61.32 & 97.12 & 94.77 & $\textbf{99.25}^{\dagger}$ &97.49 & 98.72 \\
Ads & 85.99 & 94.43 & 96.05 & \textbf{96.76} &96.44 & $97.59^{\bullet}$ \\
Wilt & 97.16 & 97.25 & 97.23 & \textbf{98.56} &98.27 & 98.10 \\
Wine-Quality & 57.31 & 56.68 & 53.22 & \textbf{60.56} &55.06 & $64.78^{\bullet}$ \\
Phishing & 94.35 & 94.47 & 95.44 & $\textbf{96.07}^{\dagger}$ &96.45 & 95.56 \\
Nursery & 93.42 & 93.52 & 91.01 & $\textbf{99.28}^{\dagger}$ &95.67 & 96.89 \\
Connect-4 & 66.19 & 76.75 & 72.82 & $\textbf{81.46}^{\dagger}$ &76.27 & 80.05 \\ \hline
Average Rank & 5.1 & 4.1 & 4.8 & \textbf{1.5} & 3.7 & 1.8 \\ \hline
\end{tabular}
\end{sc}
\end{center}
\vskip -0.1in
\end{table*}

\begin{lemma}\label{B1_DP}
  The impurity-based multinomial distribution $\mathcal{M}(\phi)$ of feature selection is essentially the exponential mechanism of differential privacy, and satisfies $B_{1}$-differential privacy. 
\end{lemma}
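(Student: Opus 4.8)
The plan is to recognize the $M(\bm{\phi})$-based feature selection as a verbatim instance of the exponential mechanism of Definition 2, and then read the privacy budget off the scaling constant $B_1$. Concretely, I would take the output space to be the feature set $O=\mathcal{A}=\{A_1,\dots,A_D\}$ and declare the score function to be the normalized impurity decrease, $q(\mathcal{D},A_j)=\hat{I}_j=\frac{I_j-\min\bm{I}}{\max\bm{I}-\min\bm{I}}$. With this choice the softmax weights are exactly $\phi_j=\frac{\exp{\frac{B_1}{2}\hat{I}_j}}{\sum_{j'=1}^{D}\exp{\frac{B_1}{2}\hat{I}_{j'}}}$, so drawing a feature from $M(\bm{\phi})$ selects each $A_j$ with probability proportional to $\exp{\frac{B_1}{2}q(\mathcal{D},A_j)}$. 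Matching this against the template $\exp{\frac{\epsilon q(\mathcal{D},o)}{2\triangle q}}$ of Eq.~\eqref{EM} forces the identification $\frac{\epsilon}{2\triangle q}=\frac{B_1}{2}$, i.e. the effective privacy parameter is $\epsilon=B_1\cdot\triangle q$. Thus everything reduces to pinning down the sensitivity $\triangle q$.

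The crux is therefore the sensitivity bound, and I would argue $\triangle q\le 1$ by a structural rather than computational argument. The normalization step sends the smallest coordinate of $\bm{I}$ to $0$ and the largest to $1$, so for \emph{every} dataset and \emph{every} feature the score lies in the fixed interval, $q(\mathcal{D},A_j)=\hat{I}_j\in[0,1]$. Consequently, for any neighboring datasets $\mathcal{D},\mathcal{D}'$ differing in a single record and any fixed $A_j$, both scores $q(\mathcal{D},A_j)$ and $q(\mathcal{D}',A_j)$ lie in $[0,1]$, whence $\left|q(\mathcal{D},A_j)-q(\mathcal{D}',A_j)\right|\le 1$. Taking the maximum over features and over neighboring pairs in Eq.~\eqref{sensitivity} gives $\triangle q\le 1$.

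To conclude I would invoke the differential-privacy guarantee of the exponential mechanism with $(\epsilon,\triangle q)=(B_1,1)$: the ratio $\Pr[\mathcal{M}(\mathcal{D})=A_j]/\Pr[\mathcal{M}(\mathcal{D}')=A_j]$ factors into a per-output term bounded by $\exp{\epsilon/2}$ (using $\triangle q\le 1$) and a normalizer-ratio term also bounded by $\exp{\epsilon/2}$, so the product is at most $\exp{\epsilon}=\exp{B_1}$, verifying Eq.~\eqref{DP}. Since the true sensitivity of $\hat{\bm{I}}$ is at most the value $\triangle q=1$ used in the mechanism, the bound is valid; and as any mechanism that is $\epsilon'$-DP with $\epsilon'\le B_1$ is \emph{a fortiori} $B_1$-DP, the feature-selection step satisfies $B_1$-differential privacy.

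The only genuinely delicate point, and the step I expect to be the main obstacle, is the sensitivity under \emph{data-dependent} normalization: both the numerator $I_j-\min\bm{I}$ and the denominator $\max\bm{I}-\min\bm{I}$ shift when a record is changed, so a coordinate-wise perturbation estimate would be messy and possibly unbounded. The range argument above sidesteps this entirely by exploiting that normalization confines the score to $[0,1]$ irrespective of the data, which is precisely why the normalization was inserted in the design. I would also briefly dispatch the degenerate case $\max\bm{I}=\min\bm{I}$: there the softmax is uniform, the selection is independent of $\mathcal{D}$, and the $B_1$-DP bound holds trivially.
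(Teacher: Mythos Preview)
Your proposal is correct and follows essentially the same route as the paper: identify the softmax as an exponential mechanism with score $q(\mathcal{D},A_j)=\hat{I}_j$, bound the sensitivity by $1$ via the range argument $\hat{I}_j\in[0,1]$, and then factor the probability ratio into a per-output term and a normalizer-ratio term, each bounded by $\exp{B_1/2}$. Your treatment is in fact a bit more careful than the paper's, which simply asserts ``the corresponding sensitivity is $1$ through the normalized operation'' without flagging the data-dependent normalization or the degenerate case $\max\bm{I}=\min\bm{I}$.
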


\begin{lemma}
  The impurity-based multinomial distribution $\mathcal{M}(\varphi)$ of split value selection is essentially the exponential mechanism of differential privacy, and satisfies $B_{2}$-differential privacy. 
\end{lemma}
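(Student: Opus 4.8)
The plan is to exhibit $\mathcal{M}(\bm{\varphi})$ as an instance of the exponential mechanism whose score function is the normalized impurity decrease, and then read off the privacy budget by matching parameters against the definition in \eqref{EM}. This mirrors the argument for $\mathcal{M}(\bm{\phi})$ in Lemma \ref{B1_DP}; the only substantive change is that the output set is now the list of candidate split values $\{v_{1,j}, \ldots, v_{m,j}\}$ for the already-selected feature $A_j$, rather than the features themselves.

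First I would fix the output set $O = \{v_{1,j}, \ldots, v_{m,j}\}$ and take the score function to be $q(\mathcal{D}^S_u, v_{i,j}) = \hat{I}_{i,j}$, the min-max normalized impurity decrease from the split-value selection step. By construction $\mathcal{M}(\bm{\varphi})$ outputs $v_{i,j}$ with probability $\varphi_i = \frac{\exp{\frac{B_2}{2}\hat{I}_{i,j}}}{\sum_{l=1}^m \exp{\frac{B_2}{2}\hat{I}_{l,j}}}$, which I would rewrite in the form $\frac{\exp{\frac{\epsilon\, q}{2\triangle q}}}{\sum_{o'\in O}\exp{\frac{\epsilon\, q}{2\triangle q}}}$ so that it matches \eqref{EM} verbatim.

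The key step is to bound the sensitivity $\triangle q$ from \eqref{sensitivity}. Because the normalization maps every coordinate of $\bm{I}^{(j)}$ into $[0,1]$, the score $\hat{I}_{i,j}$ lies in $[0,1]$ for every dataset and every candidate value; hence for any neighboring datasets $\mathcal{D}, \mathcal{D}'$ differing in one record and any fixed output, $|q(\mathcal{D}, v_{i,j}) - q(\mathcal{D}', v_{i,j})| \le 1$, so $\triangle q \le 1$. Taking $\triangle q = 1$ forces $\frac{\epsilon}{2\triangle q} = \frac{B_2}{2}$, i.e. $\epsilon = B_2$, and the privacy guarantee of the exponential mechanism then yields $B_2$-differential privacy directly.

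The main obstacle is the sensitivity argument: changing a single record perturbs both the raw impurity decreases $I_{i,j}$ and the minimum and maximum used in the normalization at the same time, so a naive coordinatewise bound on the unnormalized scores is not immediate. I would sidestep this by arguing purely from the range of the normalized score—since $\hat{I}_{i,j} \in [0,1]$ holds unconditionally for every dataset, the worst-case difference over all neighboring pairs is controlled by the length of this unit interval regardless of how the normalization rescales, which cleanly delivers $\triangle q \le 1$.
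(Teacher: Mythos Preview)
Your proposal is correct and follows essentially the same approach as the paper: both identify the normalized impurity decrease $\hat{I}^{(j)}_{i}$ as the score function, argue that its range $[0,1]$ forces sensitivity at most $1$, and then read off the budget $\epsilon=B_2$ from the exponential-mechanism form. The only cosmetic difference is that you invoke the exponential mechanism's privacy guarantee as a black box, whereas the paper writes out the ratio $\Pr[\mathcal{M}(\varphi,\mathcal{D}^S)=a_j[i]]/\Pr[\mathcal{M}(\varphi,\mathcal{D}'^S)=a_j[i]]$ explicitly and bounds each factor by $\exp{B_2/2}$; the underlying argument is identical.
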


\begin{lemma}
  The label selection of each leaf in a tree satisfies $B_{3}$-differential privacy. 
\end{lemma}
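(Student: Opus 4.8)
The plan is to recognize the randomized label selection as a direct instance of the exponential mechanism and then reduce the whole claim to a single sensitivity bound, mirroring the structure of Lemma \ref{B1_DP} and its split-value counterpart. Fix a leaf and its estimation set $\mathcal{N}_h^E(\bm{x})$, and regard the per-leaf label selection as a mechanism $\mathcal{M}$ whose output space is the label set $O=\{1,\dots,K\}$. I would take the score function to be $q(\mathcal{D}^E, c)=\eta^{(c)}(\bm{x})=\frac{1}{|\mathcal{N}_h^E(\bm{x})|}\sum_{(\bm{X},Y)\in \mathcal{N}_h^E(\bm{x})}\1{Y=c}$, so that the selection rule $\Pr[\mathcal{M}=c]\propto \exp{\frac{B_3\,\eta^{(c)}(\bm{x})}{2}}$ matches the exponential-mechanism form $\Pr[\mathcal{M}=o]\propto \exp{\frac{\epsilon\, q(\mathcal{D},o)}{2\triangle q}}$ exactly when $\epsilon/\triangle q = B_3$.

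The key step is to control the sensitivity $\triangle q=\max_{c,\mathcal{D},\mathcal{D}'}|q(\mathcal{D}^E,c)-q(\mathcal{D}'^E,c)|$. First I would note that, because the tree is grown from the structure points $\mathcal{D}^S$ and the estimation labels do not affect the partition (the hypothesis of Lemma \ref{devroye61-c}), replacing a single record of $\mathcal{D}$ leaves the leaf boundaries intact and can only perturb the label counts inside the one leaf that contains that record. Hence the score of any fixed leaf either is unchanged or moves by the effect of a single estimation point. Since $\eta^{(c)}(\bm{x})$ is a fraction lying in $[0,1]$, this perturbation is at most the full range of the score, i.e. $|q(\mathcal{D}^E,c)-q(\mathcal{D}'^E,c)|\le 1$ for every label $c$ and every neighboring pair, so $\triangle q\le 1$.

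Finally I would feed this bound into the mechanism. Writing the selection weight as $\exp{\frac{B_3\,\eta^{(c)}}{2}}=\exp{\frac{(B_3\triangle q)\,\eta^{(c)}}{2\triangle q}}$ identifies the effective privacy parameter as $\epsilon=B_3\triangle q$; by the guarantee of the exponential mechanism this yields $(B_3\triangle q)$-differential privacy, and since $\triangle q\le 1$ we have $B_3\triangle q\le B_3$, so the label selection is $B_3$-differentially private (a mechanism that is $\epsilon$-DP is also $\epsilon'$-DP for any $\epsilon'\ge\epsilon$). Equivalently, one may simply take $\triangle q=1$ as a valid upper bound on the sensitivity and read off $\epsilon=B_3$ directly.

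The step I expect to be the main obstacle is the sensitivity bound: it requires care to argue that changing one record of the \emph{whole} dataset touches the score of a given leaf only through a single estimation point --- which is exactly where the structure/estimation split and the disjointness of leaves (the same disjointness exploited by the parallel-composition Property \ref{PC}) are needed --- and then to justify that the conservative range bound $\triangle q=1$ is the correct quantity to plug in to obtain the stated budget $B_3$ rather than an artificially sharper one. Everything else is a direct substitution into the exponential-mechanism guarantee.
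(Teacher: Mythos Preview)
Your proposal is correct and follows essentially the same approach as the paper: bound the sensitivity of the score $\eta^{(c)}(\bm{x})\in[0,1]$ by $1$, read off the exponential-mechanism guarantee to obtain $B_3$-differential privacy for the per-leaf selection, and then invoke parallel composition over the disjoint leaves. The only cosmetic difference is that the paper writes out the likelihood-ratio bound explicitly rather than citing the exponential-mechanism theorem as a black box, and it takes the neighboring pair directly at the level of $\mathcal{D}^E$ (with $\mathcal{D}^S$ held fixed) rather than arguing, as you do, that the structure/estimation split is what justifies this.
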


Based on the above properties and lemmas, we can obtain the following theorem: 

\begin{theorem}\label{privacy}
The proposed MRF satisfies $\epsilon$-differential privacy when the hyper-parameters $B_{1}$, $B_{2}$ and $B_{3}$ satisfy $B_{1}+B_{2}=\epsilon /(d\cdot t)$ and $B_{3}=\epsilon /t$, where $t$ is the number of trees and $d$ is the depth of a tree such that $d\leqslant \mathcal{O}(\frac{|\mathcal{D}^E|}{k})$.
\end{theorem}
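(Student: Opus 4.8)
The plan is to aggregate the three per-mechanism guarantees---Lemma \ref{B1_DP} for feature selection, together with the subsequent lemmas giving $B_2$-DP for split-value selection and $B_3$-DP for leaf-label selection---using the two composition rules (Property \ref{SC} and Property \ref{PC}), organizing the accounting around the fact that $\epsilon$-differential privacy concerns neighboring datasets differing in a \emph{single} record. The essential idea is to track how much cumulative privacy loss one record can possibly incur.

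First I would treat a single structure node $u$. Choosing a split feature via $M(\bm{\phi})$ and then a split value via $M(\bm{\varphi})$ are two mechanisms evaluated on the \emph{same} subset $\mathcal{D}_u^S$, so by sequential composition (Property \ref{SC}) the per-node cost is $B_1+B_2$. To lift this to the whole structure, I would exploit that at any fixed depth the node subsets are pairwise disjoint: by parallel composition (Property \ref{PC}) an entire level costs only the maximum over its nodes, namely $B_1+B_2$. A single structure point, however, helps decide the split at every node along its root-to-leaf path, a chain of at most $d$ nodes, and these are sequential; hence the full structure-building mechanism is $d(B_1+B_2)$-DP. Here the hypothesis $d\le\mathcal{O}(|\mathcal{D}^E|/k)$ is exactly what makes this depth finite, since requiring at least $k$ estimation points per leaf bounds the number of leaves, and therefore the depth, of the tree.

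Next I would handle prediction. Each leaf's randomized label is drawn from the estimation points in $\mathcal{N}_h^E(\bm{x})$, and distinct leaves hold disjoint estimation points, so parallel composition makes the label mechanism of a whole tree $B_3$-DP. Crucially, the structure is fit on $\mathcal{D}^S$ while labels use the disjoint set $\mathcal{D}^E$; structure and labels therefore compose \emph{in parallel}, so one tree is $\max\{d(B_1+B_2),\,B_3\}$-DP. Substituting $B_1+B_2=\epsilon/(d t)$ and $B_3=\epsilon/t$ equalizes both arguments at $\epsilon/t$, making each tree $(\epsilon/t)$-DP. Finally, because each tree re-partitions the \emph{full} dataset $\mathcal{D}$ independently, one record may influence any of the $t$ trees, so the trees compose sequentially on $\mathcal{D}$; Property \ref{SC} then gives $t\cdot(\epsilon/t)=\epsilon$.

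I expect the main obstacle to be the composition bookkeeping rather than any hard estimate: one must correctly recognize that sibling subtrees and distinct leaves live on disjoint data (parallel, hence free), that depth and the across-trees direction are the only sequential axes for a fixed record, and that $\mathcal{D}^S$ and $\mathcal{D}^E$ being disjoint is what lets the $B_3$ term enter via a maximum rather than a sum. Getting any of these wrong---e.g. summing over all nodes instead of over a single path, or over all leaves instead of taking a maximum---would inflate the budget and break the stated parameter choice.
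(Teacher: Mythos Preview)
Your proposal is correct and follows essentially the same route as the paper: sequential composition within a node gives $B_1+B_2$, parallel composition across nodes at a fixed depth keeps each layer at $B_1+B_2$, sequential over the $d$ layers gives $d(B_1+B_2)$ for structure, parallel with the disjoint $\mathcal{D}^E$ brings in $B_3$ via a $\max$, and sequential over the $t$ trees closes at $\epsilon$. If anything, you articulate the per-level parallel step and the root-to-leaf sequential step more carefully than the paper, which simply asserts ``the privacy budget consumed for each layer of a tree is $B_1+B_2$'' and then multiplies by $d$.
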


\section{Experiments}
\subsection{Machine Learning}\label{settings}

\noindent \textbf{Dataset Selection}.
We conduct experiments on twenty-two UCI datasets used in previous consistent RF works \cite{denil2014,Wang2018,haghiri2018}. The specific description of used datasets is shown in the \textbf{Appendix}.

\noindent \textbf{Baselines}.
We select {\it Denil14} \cite{denil2014}, BRF \cite{Wang2018} and CompRF \cite{haghiri2018} as the baseline methods in the following evaluations. Those methods are the state-of-the-art consistent random forests variants. Specifically, we evaluate two different CompRF variants proposed in \cite{haghiri2018}, including consistent CompRF (CompRF-C) and inconsistent CompRF (CompRF-I). Besides, we provide the results of standard RF ({\it Breiman}) \cite{Breiman2001} as another important baseline for comparison.

\noindent \textbf{Training Setup}.
We carry out 10 times 10-fold cross validation to generate 100 forests for each method. All forests have $t=100$ trees, minimum leaf size $k=5$. Gini index is used as the impurity measure except for CompRF. In {\it Denil14}, BRF, CompRF, and RF, we set the size of the set of candidate features $\sqrt{D}$. The partition rate of all consistent RF variants is set to $1$. All settings stated above are based on those used in \cite{denil2014, Wang2018}. In MRF, we set $B_1=B_2=10$ and $B_3 \rightarrow \infty$ in all datasets, and the hyper-parameters of baseline methods are set according to their paper.

\begin{figure*}[!ht]
\centering
\subfigure[]{
\includegraphics[width=0.23\textwidth]{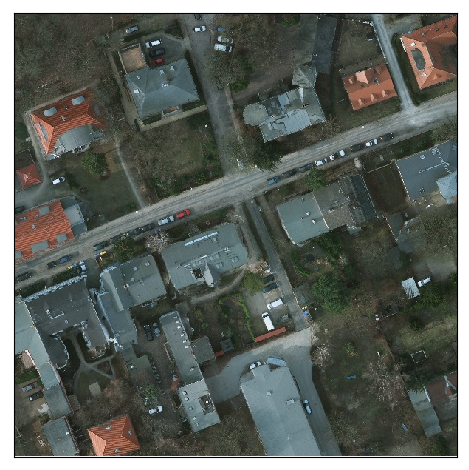}}
\subfigure[]{
\includegraphics[width=0.23\textwidth]{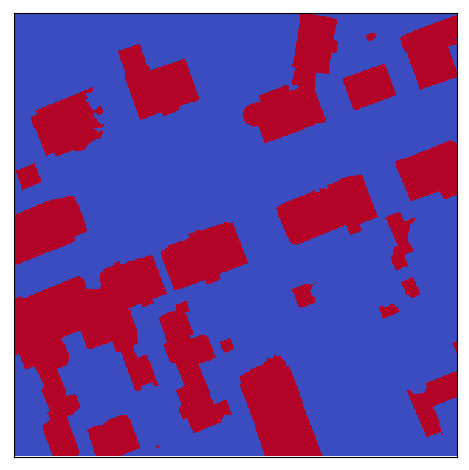}}
\subfigure[]{
\includegraphics[width=0.26\textwidth]{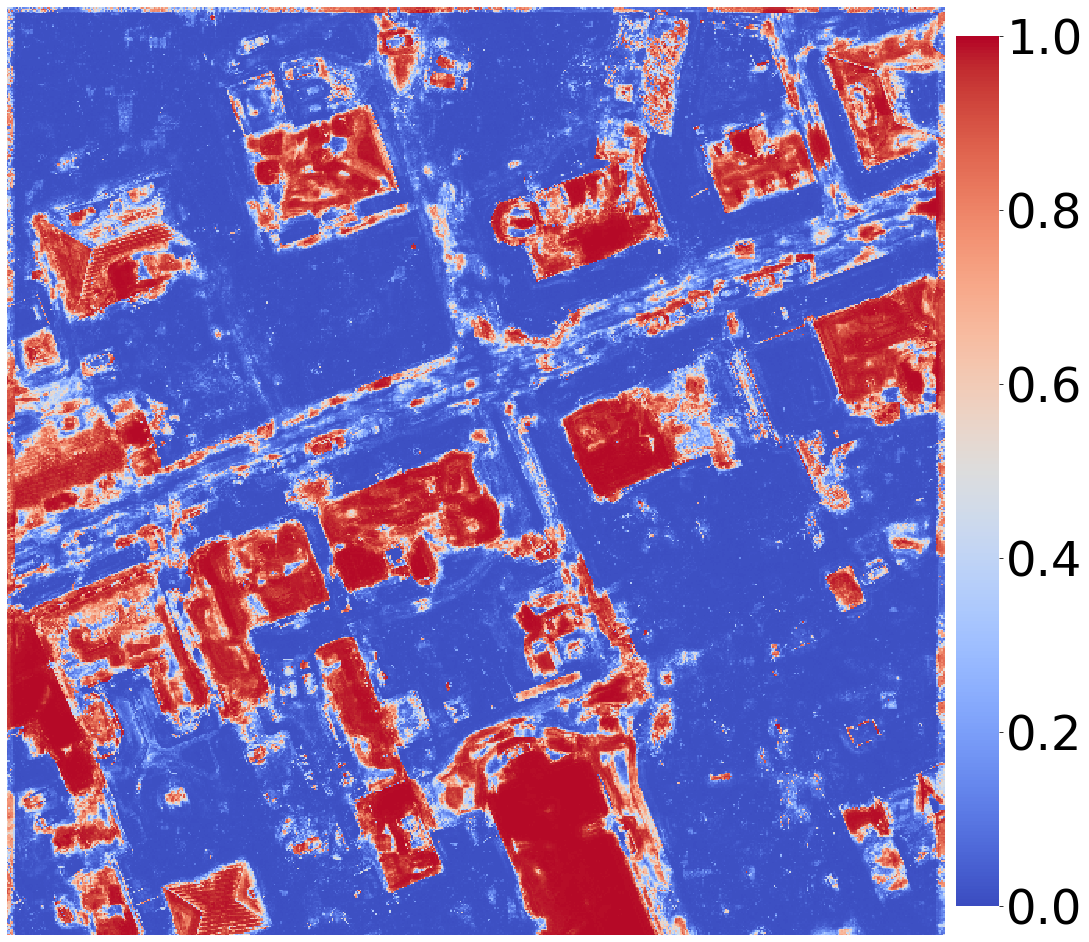}}
\caption{Visualization result of the proposed MRF. \textbf{(a)}: Aerial image; \textbf{(b)}: Groud-truth; \textbf{(c)}: The heat map of the prediction. The pixel is predicted as within the building area if and only if its color is red in the heat map.}
\label{fig:seg}
\vspace{-0.3em}
\end{figure*}

\begin{figure*}[!ht]
\centering
\subfigure[]{
\includegraphics[width=0.44\textwidth]{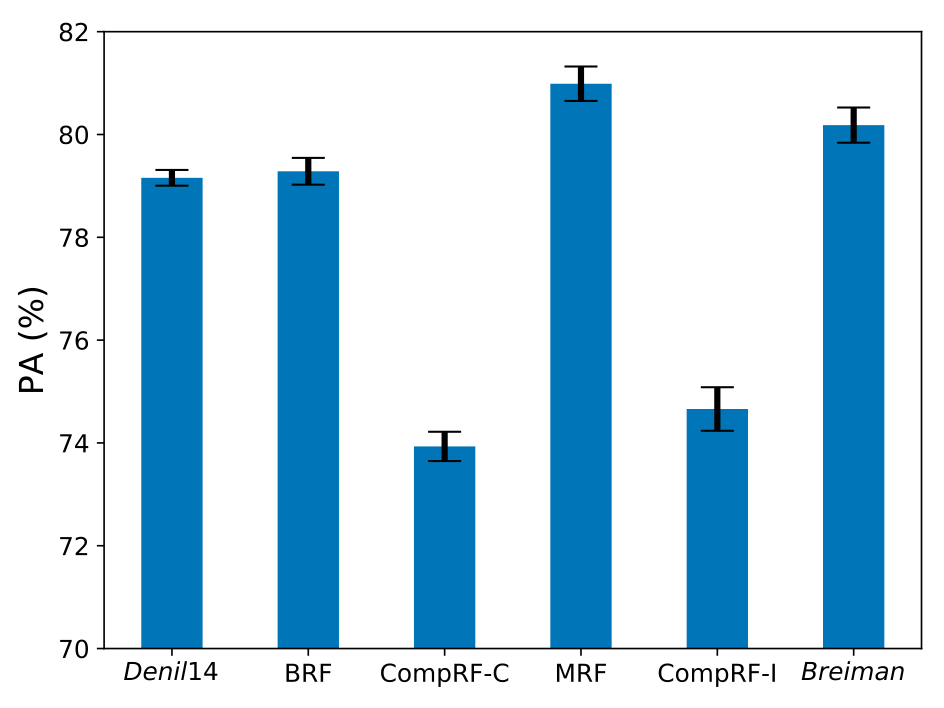}}
\subfigure[]{
\includegraphics[width=0.44\textwidth]{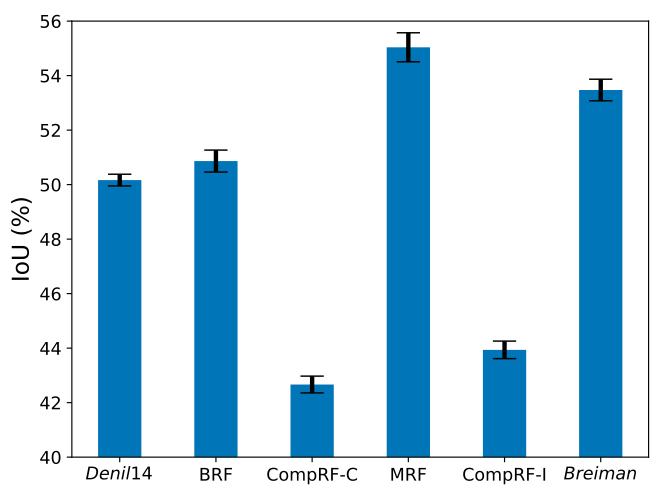}}
\caption{The pixel-wise accuracy (PA) and the intersection over union (IOU) of different methods. The standard deviation is indicated by the error bar.}
\label{fig:seg_result}
\vspace{-0.2em}
\end{figure*}

\noindent \textbf{Results}.
Table \ref{ClassAcc} shows the average test accuracy. Among the four consistent RF variants, the one with the highest accuracy is indicated in boldface. In addition, we carry out Wilcoxon's signed-rank test \cite{demvsar2006} to test for the difference between the results from the MRF and the standard RF at significance level 0.05. Those for which the MRF is significantly better than the standard RF are marked with "$\dagger$". Conversely, those for which RF is significantly better are marked with "$\bullet$". Moreover, the last line shows the average rank of different methods across all datasets.

As shown in Table \ref{ClassAcc}, MRF significantly exceeds all existing consistent RF variants. For example, MRF achieves more than $2\%$ improvement in most cases, compared with the current state-of-the-art method. Besides, the performance of the MRF even surpasses Breiman's original random forest in twelve of the datasets, and the advantage of the MRF is statistically significant in ten of them. To the best of our knowledge, this has never been achieved by any other consistent random forest methods. Note that we have not fine-tuned the hyper-parameters such as $B_1$, $B_2$ and $t$. The performance of the MRF might be further improved with the tuning of these parameters, which would bring additional computational complexity.

\subsection{Semantic Segmentation}

\noindent \textbf{Task Description}.
We treat the segmentation as a pixel-wise classification and build the dataset based on aerial images\footnote{\url{https://github.com/dgriffiths3/ml_segmentation}}. Each pixel of these images are labeled for one of two semantic classes: \emph{building} or \emph{not building}. Except for the RGB values of each pixel, we also construct some other widely used features. Specifically, we adopt local binary pattern \cite{ahonen2006face} with radius 24 to characterize texture, and calculate eight Haralick features \cite{Haralick1973} (including angular second moment, contrast, correlation, entropy, homogeneity, mean, variance, and standard deviation). We sample 10,000 pixels without replacement for training, and test the performance on the test image. To reduce the effect of randomness, we repeat the experiments 5 times with different training set assignments. Besides, all settings are the same as that of Section \ref{settings} unless otherwise specified.

\begin{figure*}[!ht]
\centering
\subfigure[ECHO]{
\label{fig1a}
\includegraphics[width=0.3\textwidth]{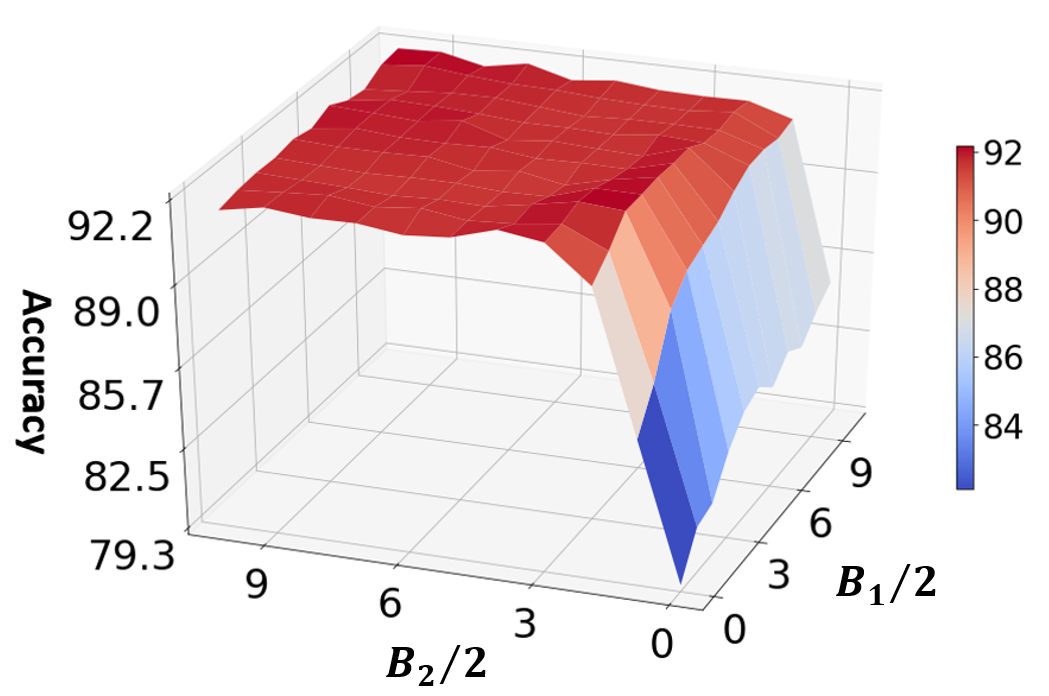}}
%\hspace{1in}
\subfigure[CMC]{
\label{fig1:subfig:b}
\includegraphics[width=0.3\textwidth]{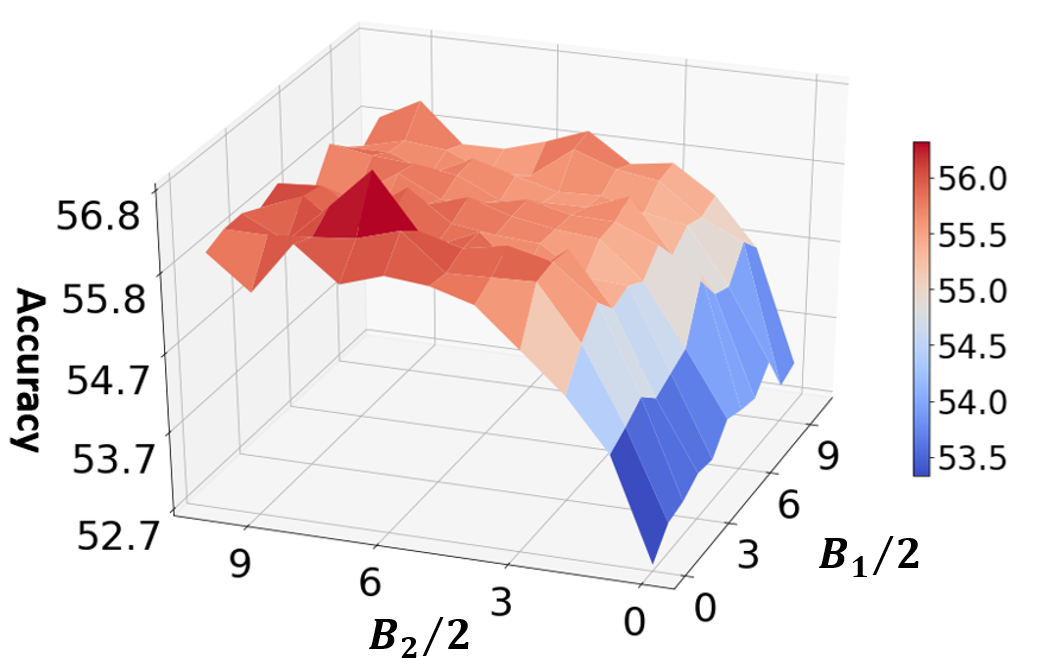}}
\subfigure[ADS]{
\label{fig1:subfig:c}
\includegraphics[width=0.3\textwidth]{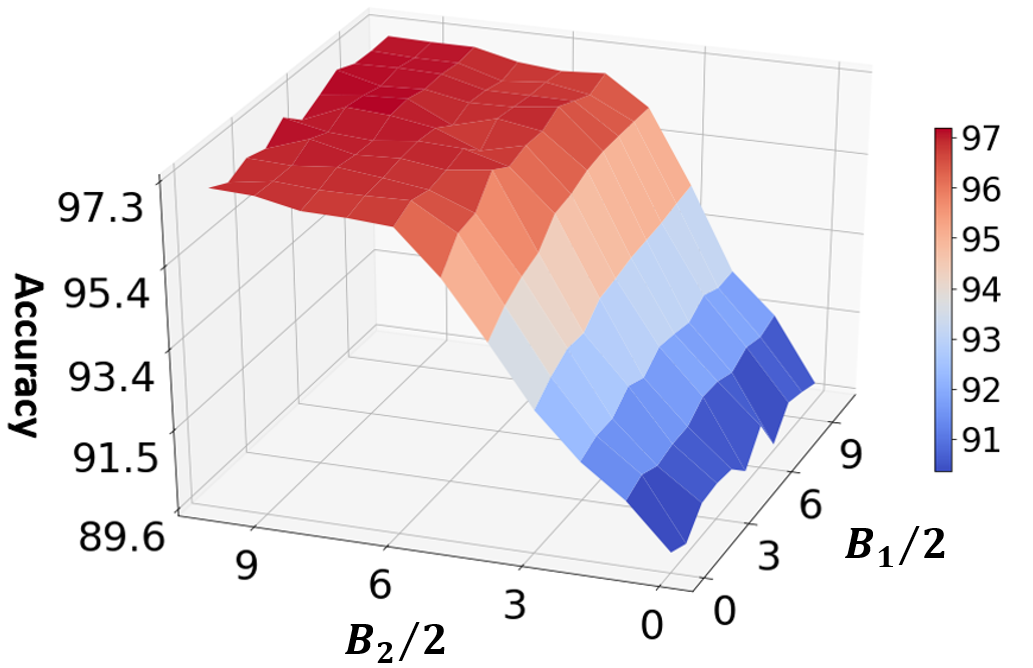}}
\subfigure[WDBC]{
\label{fig1:subfig:d}
\includegraphics[width=0.3\textwidth]{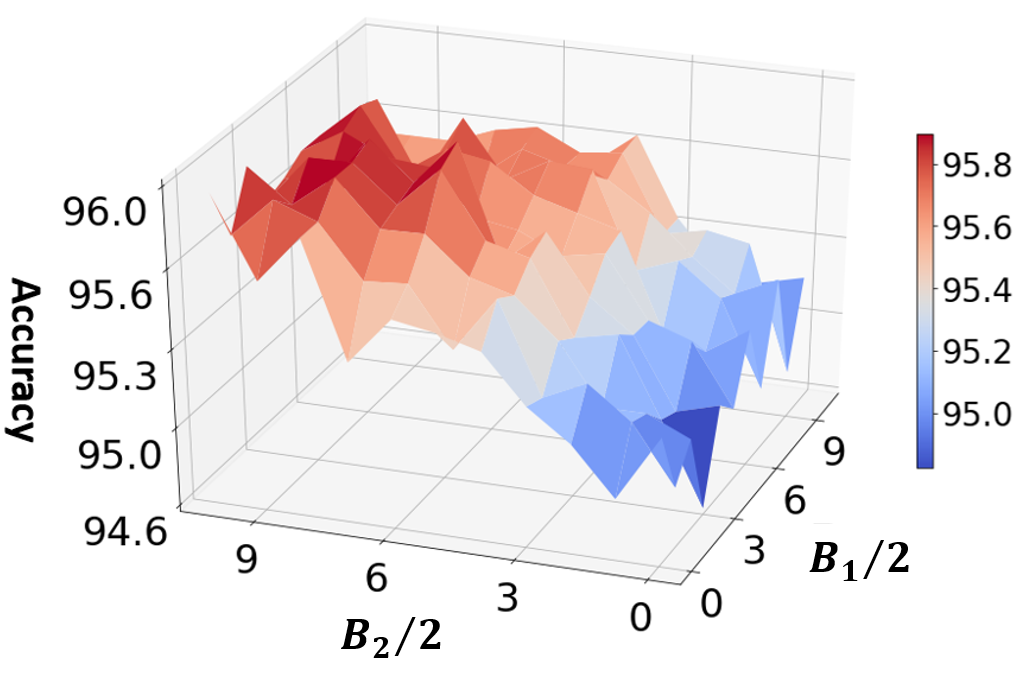}}
\subfigure[CAR]{
\label{fig1:subfig:e}
\includegraphics[width=0.3\textwidth]{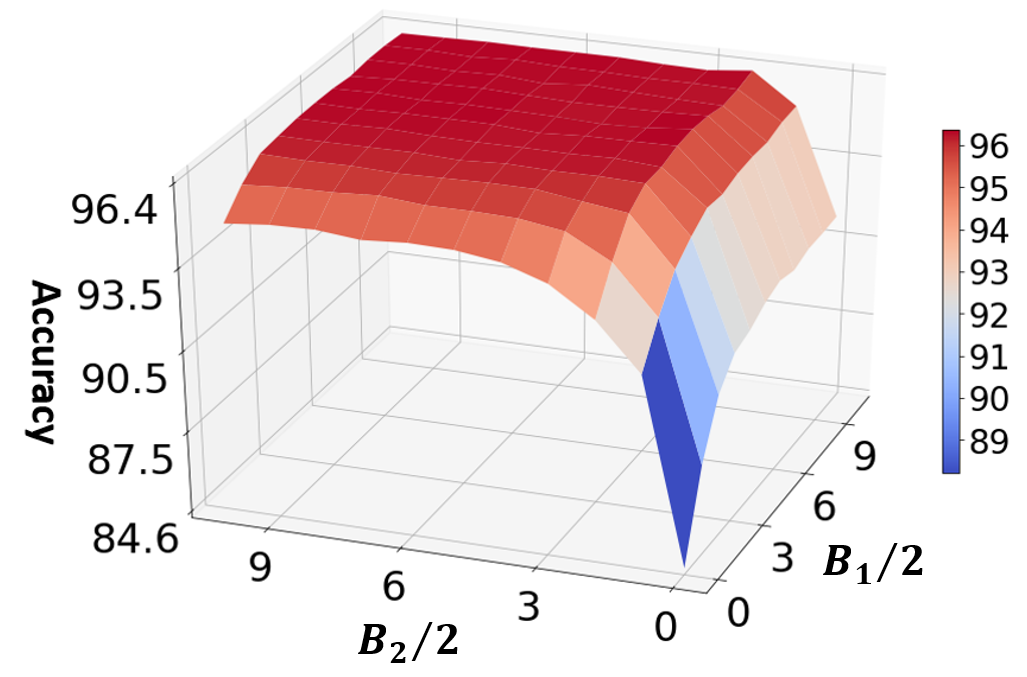}}
\subfigure[CONNECT-4]{
\label{fig1:subfig:f}
\includegraphics[width=0.3\textwidth]{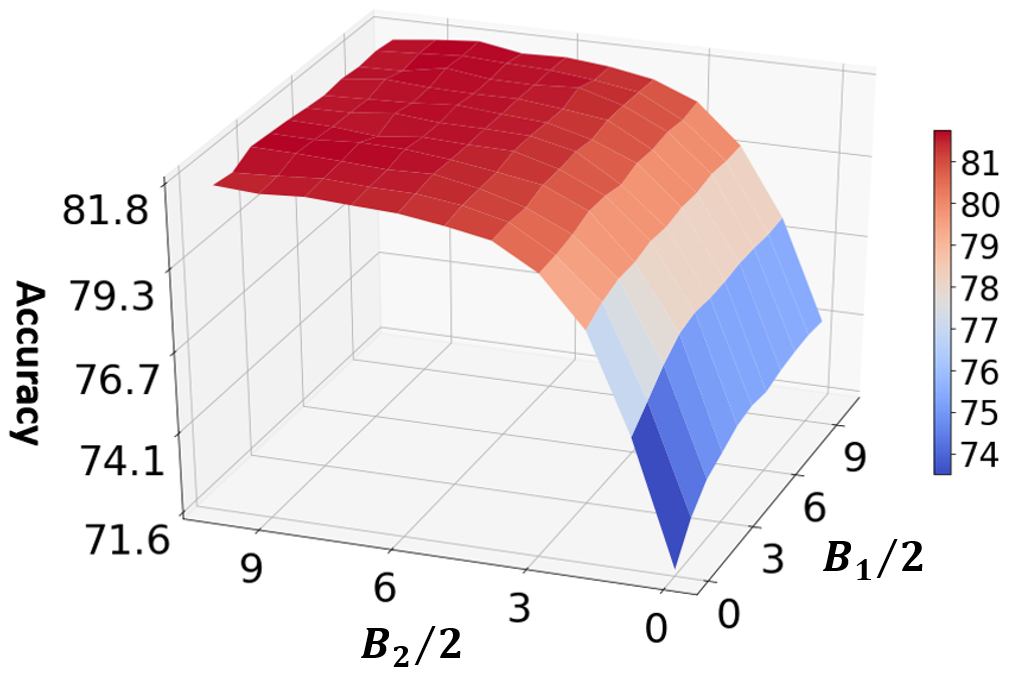}}
\caption{Accuracy (\%) of the MRF under different hyper-parameter values.}
\label{fig2} %% label for entire figure

\end{figure*}

\noindent \textbf{Results}. We adopt two classical criteria to evaluate the performance of different models, including the pixel-wise accuracy (PA) and the intersection over union (IoU). As shown in Figure \ref{fig:seg_result}, the performance of MRF is better than that of RF. Compare with existing consistent RF, the improvement of MRF is more significant. We also visualize the segmentation results of MRF, as shown in Figure \ref{fig:seg}. Although the performance of MRF may not as good as some state-of-the-art deep learning based methods, it still achieves plausible results.

\subsection{The Effect of Hyper-parameters}
In this part, we evaluate the performance of the consistent MRF under different hyper-parameters $B_1$ and $B_2$. Specifically, we consider a range of $[0,20]$ for both $B_1$ and $B_2$, and other hyper-parameters are the same as those stated in Section \ref{settings}. Besides, the performance of each tree in MFR with respect to the privacy budget is shown in the \textbf{Appendix}.

Figure \ref{fig2} displays the results for six datasets representing small, medium and large datasets. It shows that the performance of the MRF is significantly improved as $B_2$ increases from zero, and it further becomes relatively stable when $B_2\ge 10$. Similarly, the performance also improves as $B_1$ increases from zero, but the effect is not obvious. When $B_2$ is too small, the resulting multinomial distributions would allow too much randomness, leading to the poor performance of the MRF. Besides, as shown in the figure, although the optimal values of $B_1$ and $B_2$ may depend on the specific characteristics of a dataset, such as the outcome scale and the dimension of the impurity decrease vector, at our default setting ($B_1=B_2=10$), the MRF achieves competitive performance in all datasets.

\section{Conclusion}
In this paper, we propose a new random forest framework, dubbed multinomial random forest (MRF), based on which we analyze its consistency and privacy-preservation property. In the MRF, we propose two impurity-based multinomial distributions for the selection of split feature and split value. Accordingly, the best split point has the highest probability to be chosen, while other candidate split points that are nearly as good as the best one will also have a good chance to be selected. This split process is more reasonable, compared with the greedy split criterion used in existing methods. Besides, we also introduce the exponential mechanism of differential privacy for selecting the label of a leaf to discuss the privacy-preservation of MRF. Experiments and comparisons demonstrate that the MRF remarkably surpasses existing consistent random forest variants, and its performance is on par with Breiman's random forest. It is by far the first random forest variant that is consistent and has comparable performance to the standard random forest.

\newpage

\section{Broader Impact}

Data privacy is critical for data security and consistency is also an important theoretical property in this big data era. As such, our work has positive impacts in general. 

Specifically, from the aspect of positive broader impacts, \textbf{(1)} our work theoretically analyzed the data privacy of the proposed RF framework, which allows the trade-off between performance and data privacy; \textbf{(2)} MRF is the first consistent RF variant whose performance is on par with that of standard RF, therefore it can be used as an alternative of RF and inspire further research in this area; \textbf{(3)} the proposed impurity-based random splitting process is empirically verified to be more effective compared with the standard greedy approach, whereas its principle is not theoretically discussed. It may further inspire the theoretical analysis of this method.

For the negative broader impact, the proposed method further verified the potential of data, which may further enhance the concerns about data privacy.

\bibliographystyle{unsrt}
\bibliography{example_paper}

\newpage

\begin{center}
    \begin{Large}
        \textbf{Appendix}
    \end{Large}
\end{center}

\begin{appendices}

\setcounter{lemma}{0}
\setcounter{theorem}{0}

\section{Omitted Proofs}
\label{proof}
\subsection{The Proof of Lemma 3}

\setcounter{lemma}{2}
\begin{lemma}
  In the MRF, the probability that any given feature $A$ is selected to split at each node has lower bound $P_1>0$.
\end{lemma}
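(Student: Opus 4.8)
The plan is to exploit the fact that the per-feature selection probability is a softmax evaluated on the \emph{normalized} impurity vector $\hat{\bm{I}}$, whose entries lie in $[0,1]$ by construction. Writing the probability that a given feature $A_j$ is chosen as
$$
\phi_j = \frac{\exp{\tfrac{B_1}{2}\hat I_j}}{\sum_{k=1}^{D}\exp{\tfrac{B_1}{2}\hat I_k}},
$$
the key observation is that each normalized entry satisfies $\hat I_j = \frac{I_j-\min\bm{I}}{\max\bm{I}-\min\bm{I}}\in[0,1]$, so the argument of every exponential is confined to $[0,\tfrac{B_1}{2}]$. This single fact suffices to bound $\phi_j$ away from zero independently of the impurity values realized at the node.

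Concretely, I would lower-bound the numerator by $\exp{0}=1$ (using $\hat I_j\ge 0$) and upper-bound each summand of the denominator by $\exp{\tfrac{B_1}{2}}$ (using $\hat I_k\le 1$), so that the denominator is at most $D\cdot\exp{\tfrac{B_1}{2}}$. Combining the two estimates yields
$$
\phi_j \ge \frac{1}{D\,\exp{\tfrac{B_1}{2}}}=:P_1>0.
$$
Since $P_1$ depends only on the fixed hyper-parameter $B_1$ and the feature dimension $D$, and not on the particular node nor on the data that reach it, the same positive bound holds simultaneously at \emph{every} node, which is exactly what the lemma requires; in particular no worst-case-over-nodes or union-bound argument is needed.

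The only points needing care are the degenerate case and the uniformity claim. When $\max\bm{I}=\min\bm{I}$ the normalization is ill-defined; in that situation all features carry identical impurity decrease and the rule reduces to a uniform draw, so $A_j$ is selected with probability $1/D\ge P_1$ and the bound survives. I therefore expect no substantive obstacle: the genuine content of the statement is not a delicate inequality but the clean boundedness afforded by the normalization step together with $B_1<\infty$, and the proof amounts to recording these observations precisely.
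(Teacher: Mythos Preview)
Your proposal is correct and follows essentially the same approach as the paper: both exploit $\hat I_j\in[0,1]$ to bound the softmax probability away from zero. The paper pins down the extremal configuration $\hat{\bm I}=(0,1,\ldots,1)$ and records the slightly sharper constant $P_1=\frac{1}{1+(D-1)e^{B_1}}$, whereas you bound every denominator term by $e^{B_1/2}$ to obtain $P_1=\frac{1}{D\,e^{B_1/2}}$; since the lemma only requires some positive lower bound, your cruder constant is perfectly adequate, and you additionally handle the degenerate case $\max\bm I=\min\bm I$ that the paper leaves implicit.
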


\begin{proof}
Recall that the normalized impurity decrease vector $\hat{\bm{I}} \in [0,1]^D$. When $\hat{\bm{I}} = (1,0,\cdots,0)$, the probability that the first feature is selected for splitting is the largest, and when $\hat{\bm{I}} = (0,1,\cdots,1)$, the probability reaches smallest. Therefore 
\begin{equation*}
P_1 \triangleq \frac{1}{1+(D-1)e^{B_1}}\leq \Pr\left(v \in A\right)\leq \frac{e^{B_1}}{e^{B_1}+(D-1)}.
\end{equation*}
\end{proof}

\subsection{The Proof of Lemma 4}

\begin{lemma}
  Suppose that features are all supported on $[0,1]$. In the MRF, once a split feature $A$ is selected, if this feature is divided into
  $N (N \geq 3)$ equal partitions $A^{(1)}, \cdots, A^{(N)}$ from small to large ($i.e.$, $A^{(i)}=\left[\frac{i-1}{N},\frac{i}{N} \right]$), for any split point $v$,
$$
\exists P_2\,(P_2>0), \mathrm{s.t.}\, \Pr\left(v\in \bigcup_{i=2}^{N-1} A^{(i)} | A\right)\geq P_2.
$$
\end{lemma}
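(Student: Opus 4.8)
The plan is to reduce the statement to a counting argument on the candidate split values, exploiting that the softmax weights are uniformly bounded. Conditioned on feature $A$ being selected, the split value $v$ is drawn from the candidates $\{v_{1,j},\dots,v_{m,j}\}$ with probability $\varphi_i = e^{\frac{B_2}{2}\hat I_{i,j}}\big/\sum_{l=1}^{m} e^{\frac{B_2}{2}\hat I_{l,j}}$, where each normalized impurity decrease satisfies $\hat I_{i,j}\in[0,1]$. First I would group the candidates by which sub-interval they occupy and set $S = \{\,i : v_{i,j}\in[\tfrac1N,\tfrac{N-1}{N}]\,\}$, so that $\bigcup_{i=2}^{N-1}A^{(i)}=[\tfrac1N,\tfrac{N-1}{N}]$ is exactly the interior region whose selection probability we must bound below.

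The key observation is that $\tfrac{B_2}{2}\hat I_{i,j}\in[0,B_2/2]$, hence every weight obeys $1\le e^{\frac{B_2}{2}\hat I_{i,j}}\le e^{B_2/2}$. Bounding the numerator from below by assigning weight $1$ to each interior candidate and the denominator from above by assigning weight $e^{B_2/2}$ to all $m$ candidates gives
$$
\Pr\left(v\in\bigcup_{i=2}^{N-1}A^{(i)}\ \middle|\ A\right)
= \frac{\sum_{i\in S} e^{\frac{B_2}{2}\hat I_{i,j}}}{\sum_{l=1}^{m} e^{\frac{B_2}{2}\hat I_{l,j}}}
\ \ge\ \frac{|S|}{m\,e^{B_2/2}}.
$$
This inequality holds for any configuration of impurity decreases, so the entire statement reduces to showing that the fraction $|S|/m$ of candidate split values lying in the interior partitions is bounded away from $0$ by a constant that does not depend on the particular node.

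This last step is where the distributional hypotheses of the consistency theorem enter, and I expect it to be the main obstacle. Since $\bm{X}$ has non-zero density almost everywhere, each sub-interval of length $1/N$ carries strictly positive conditional mass under the marginal of $A$ within the node's cell, while the right-continuity of the split-point CDF at $0$ and left-continuity at $1$ rule out atoms accumulating at the two endpoints; together these guarantee that the interior mass $p_{\mathrm{int}}:=F(\tfrac{N-1}{N})-F(\tfrac1N)$ is strictly positive. Because the candidate split values are determined by the order statistics of the structure points, the empirical interior fraction $|S|/m$ concentrates around $p_{\mathrm{int}}>0$, so one may take, for instance, $P_2=\tfrac12\,p_{\mathrm{int}}\,e^{-B_2/2}>0$ and combine it with the displayed bound to conclude. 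The delicate point is to make the lower bound on $|S|/m$ hold \emph{uniformly} over all nodes rather than for a single cell in the limit, which is precisely what the density and boundary-continuity assumptions are there to secure.
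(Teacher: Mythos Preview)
Your reduction is exactly the one the paper uses: bound each softmax weight uniformly, in your notation $1\le e^{\frac{B_2}{2}\hat I_{i,j}}\le e^{B_2/2}$, and turn the interior probability into a ratio controlled by the fraction of candidate values landing in $[\tfrac1N,\tfrac{N-1}{N}]$. The paper writes this as
\[
\Pr\!\left(v\in\bigcup_{i=2}^{N-1}A^{(i)}\,\middle|\,A\right)\ \ge\ \lim_{m\to\infty}\frac{\displaystyle\int_{[1/N,(N-1)/N]}\tfrac{1}{1+(m-1)e^{B_2}}\,dv}{\displaystyle\int_{[0,1]}\tfrac{e^{B_2}}{e^{B_2}+(m-1)}\,dv}=\frac{N-2}{N}\,e^{-2B_2}\;\triangleq\;P_2,
\]
which is the same mechanism as your $|S|/(m\,e^{B_2/2})$ bound, up to constants (your $e^{-B_2/2}$ is in fact the sharper exponent consistent with the softmax actually using $\tfrac{B_2}{2}\hat{\bm I}^{(j)}$; the paper's argument also inserts the per-point upper bound into the denominator, costing an extra factor).

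The one substantive difference is precisely where you hesitate. The paper passes silently from the discrete sum over $m$ candidate values to a ratio of Lebesgue integrals with \emph{constant} integrands; that step replaces your quantity $|S|/m$ by the pure length ratio $(N-2)/N$, i.e.\ it tacitly treats the candidate split values as equidistributed on $[0,1]$ without arguing why. So the issue you single out as ``the main obstacle''---bounding $|S|/m$ away from zero uniformly over nodes---is not actually addressed in the paper's own proof, which simply declares $P_2=\tfrac{N-2}{N}e^{-2B_2}$ and moves on. Your instinct to bring in the non-zero density and boundary-continuity assumptions from Theorem~1 is the right way to close this; in that sense your proposal follows the paper's route but is more explicit about the gap that the paper leaves open.
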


\begin{proof}
Suppose $m$ is the number of possible splitting values of feature $A$, similar to Lemma \ref{P1}, the probability that a value is selected for splitting satisfies the following restriction:
\begin{equation}
    \frac{1}{1+(m-1)e^{B_2}}\leq \Pr(v)\leq \frac{e^{B_2}}{e^{B_2}+(m-1)}.
\end{equation}
In this case,
\begin{equation}
\begin{array}{l}
\Pr\left(v\in \bigcup_{i=2}^{N-1} A^{(i)}|A\right)=\frac{\displaystyle \int_{\bigcup_{i=2}^{N-1} A^{(i)}}f(v)dv}{\displaystyle \int_{A}f(v)dv} \\
\geq \displaystyle \lim_{m \to +\infty}\left(\frac{\displaystyle \int_{\bigcup_{i=2}^{N-1} A^{(i)}}\frac{1}{1+(m-1)e^{B_2}}dv}{\displaystyle \int_{A}\frac{e^{B_2}}{e^{B_2}+(m-1)}dv}\right)\\
=\displaystyle \lim_{m \to +\infty}\frac{N-2}{N} \cdot \frac{e^{B_2}+(m-1)}{e^{B_2}+(m-1)e^{2B_2}}\\
=\frac{N-2}{N}e^{-2B_2} \triangleq P_2.
\end{array}
\end{equation}
\end{proof}

\subsection{The Proof of Theorem 1}
\begin{theorem}
Suppose that $\bm{X}$ is supported on $[0,1]^D$ and have non-zero density almost everywhere, the cumulative distribution function of the split points is right-continuous at 0 and left-continuous at 1. If $B_3 \rightarrow \infty$, MRF is consistent when $k \rightarrow \infty$ and $k/n \rightarrow 0$ as $n \rightarrow \infty$.
\end{theorem}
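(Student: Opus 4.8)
The plan is to reduce the forest-level claim to the consistency of a single tree and then verify the two hypotheses of Lemma~\ref{devroye61-c}. By Lemma~\ref{vote} it suffices to prove that an individual multinomial tree $h$ is consistent, since the majority vote over the $t$ trees inherits consistency. First I would dispose of the randomized prediction: as $B_3\to\infty$, the label drawn with probability proportional to $\exp{B_3\eta^{(c)}(\bm{x})/2}$ concentrates on $\arg\max_c \eta^{(c)}(\bm{x})$, so each leaf outputs the ordinary majority vote of its estimation points, which places us exactly in the partitioning-rule setting of Lemma~\ref{devroye61-c}. The structural hypothesis of that lemma, namely that the estimation labels do not influence the partition, holds by construction: the tree topology and all split points are determined from the structure set $\mathcal{D}^S$ and its labels through the impurity vectors $\bm{I},\bm{I}^{(j)}$, whereas the leaf values are computed only afterwards from the disjoint estimation set $\mathcal{D}^E$. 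This is precisely the purpose of the training-set partition of Section~\ref{TDSP}.

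It then remains to check the two quantitative conditions. Condition~2 is immediate from the stopping rule: every leaf is required to contain at least $k$ estimation points, so $|\mathcal{N}^E(\bm{X})|\ge k\to\infty$ in probability. Condition~1, that $\diam(\mathcal{N}(\bm{X}))\to 0$ in probability, is the substantive part, and this is where Lemmas~\ref{P1} and~\ref{P2} enter. I would control each coordinate separately: writing $V_j$ for the side length of the cell $\mathcal{N}(\bm{X})$ along feature $A_j$, it is enough to show $\E{V_j}\to 0$ for every $j$, since a box satisfies $\diam(\mathcal{N}(\bm{X}))=\sqrt{\sum_j V_j^2}\le \sqrt{D}\max_j V_j$, so $\E{V_j}\to 0$ forces $\diam\to 0$ in probability via Markov's inequality. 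Fix $j$ and follow the root-to-leaf path containing $\bm{X}$. By Lemma~\ref{P1}, at each internal node of this path the feature $A_j$ is selected with probability at least $P_1>0$; by Lemma~\ref{P2} (applied with a fixed $N\ge 3$), conditional on $A_j$ being selected, the chosen value lies in the interior band $\bigcup_{i=2}^{N-1}A^{(i)}=[1/N,(N-1)/N]$ with probability at least $P_2>0$, in which case \emph{both} children shrink the current $j$-interval by a factor at most $(N-1)/N<1$. Thus each node contracts $\E{V_j}$ by a factor bounded away from $1$ with probability at least $P_1 P_2$, and it remains to argue that coordinate $A_j$ is cut unboundedly often along the path.

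This is step~(3) of the sketch: the number of splits $D_n$ on the path to the leaf of $\bm{X}$ must tend to infinity. Because the partition rate is fixed we have $|\mathcal{D}^E|=\Theta(n)\to\infty$, while each leaf is forced to retain at least $k$ estimation points; since $k/n\to 0$ a leaf can hold only a vanishing fraction of the sample, so the recursion must continue through an unbounded number of splits, and the non-zero density assumption guarantees that the path leading to a typical $\bm{X}$ is itself subdivided arbitrarily often, i.e. $D_n\to\infty$ in probability. Combining $D_n\to\infty$ with the per-node contraction and using Lemma~\ref{P1} to guarantee that each coordinate is cut a number of times of order $P_1 D_n\to\infty$, one obtains a geometric bound of the form $\E{V_j\mid D_n}\lesssim \rho^{D_n}$ with $\rho<1$, whence $\E{V_j}\to 0$ and Condition~1 follows. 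The right-/left-continuity of the split-point CDF at the endpoints enters here as the regularity condition ensuring the interior-band estimate of Lemma~\ref{P2} persists in the limit and that splits do not degenerate at the boundary.

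I expect the main obstacle to be the coupling between the data-dependent path and the split randomness: both the sequence of chosen coordinates and the cut locations are governed by the same structure points, so the ``geometric contraction per split'' and the ``each coordinate is cut infinitely often'' statements cannot be treated as literally independent Bernoulli trials, and the clean bound $\rho^{D_n}$ is only heuristic. The delicate part is to convert the lower bounds $P_1,P_2$ into a genuine contraction of $\E{V_j}$ while simultaneously establishing $D_n\to\infty$, reconciling the empirical point counts that drive the stopping rule with the population density that governs the true cell volume. I would carry this out through a nested conditioning on the realized tree structure rather than a naive product bound, and that measure-theoretic bookkeeping is the technical heart of the proof.
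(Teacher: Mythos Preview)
Your proposal is correct and follows essentially the same route as the paper: reduce to a single tree via Lemma~\ref{vote}, let $B_3\to\infty$ so that the leaf prediction becomes the majority vote, and then verify the two conditions of Lemma~\ref{devroye61-c} by combining the per-split contraction from Lemmas~\ref{P1}--\ref{P2} with the fact that the depth diverges. The paper obtains exactly your contraction factor, writing $\E{V_m(a)}\le (1-\tfrac{1}{N}P_1P_2)\,V_{m-1}(a)$ and iterating.

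Two small points of comparison. First, your argument for $D_n\to\infty$ is sketched, whereas the paper makes this explicit with a Chebyshev step: fixing a target level $m$, it shows every cell at that level has $\mu_{\bm X}$-measure at least some $p>0$ (using non-zero density), so the number of estimation points in each cell is $\mathrm{Binomial}(n,p)$-like and exceeds $k$ with probability $\to 1$ because $k/n\to 0$; hence the stopping rule is not triggered and the tree keeps splitting. Second, your attribution of the CDF-continuity hypothesis to Lemma~\ref{P2} is slightly misplaced: the paper invokes it not for the interior-band bound but precisely in this depth argument, to guarantee that the smallest child at each level retains volume at least $\sigma^{1/m}$ with probability $F_{W_i}(1-\sigma^{1/m})-F_{W_i}(\sigma^{1/m})\to 1$, so that the positive-measure conclusion above is available. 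Finally, the coupling concern you raise is real but the paper does not address it either; it simply iterates the one-step contraction inequality without further conditioning, so you need not budget for more measure-theoretic bookkeeping than that.
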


\begin{proof}
When $B_3 \rightarrow \infty$, the prediction in each node is based on majority vote, therefore it meets the prerequisite of Lemma 2. Accordingly, we can prove the consistency of MRF by showing that MRF meets two requirements in Lemma 2. 

Firstly, since MRF requires $|\mathcal{N}^E(\bm{X})| \ge k$ where $k \rightarrow \infty$ as $n \rightarrow \infty$, $|\mathcal{N}^E(\bm{X})| \rightarrow \infty$ when $n \rightarrow \infty$ is trivial. 

Let $V_m(a)$ denote the size of the $a$-th feature of $\mathcal{N}_m(\bm{X})$, where $\bm{X}$ falls into the node $\mathcal{N}_m(\bm{X})$ at $m$-th layer. To prove $diam(\mathcal{N}(\bm{X})) \rightarrow 0$ in probability, we only need to show that $\mathbb{E}(V_m(a)) \rightarrow 0$ for all $A_a \in \mathcal{A}$. For a given feature $A_a$, let $V^{*}_m(a)$ denote the largest size of this feature among all children of node $\mathcal{N}_{m-1}(\bm{X})$. By Lemma 4, we can obtain
\begin{equation}
\begin{aligned}
\mathbb{E}(V^{*}_m(a)) &\leq (1-P_2)V_{m-1}(a) + P_2 \frac{N-1}{N}V_{m-1}(a) = \left(1-\frac{1}{N}P_2\right)V_{m-1}(a).
\end{aligned}
\end{equation}

By Lemma 3, we can know
\begin{equation}
\begin{aligned}
\mathbb{E}(V_{m}(a)) &\leq (1-P_1)V_{m-1}(a) + P_1 \mathbb{E}(V^{*}_{m}(a)) = \left(1-\frac{1}{N}P_1P_2\right)V_{m-1}(a).
\end{aligned}
\end{equation}
Since $V_{0}(a)=1$,
\begin{equation}\label{c1}
 \mathbb{E}(V_{m}(a)) \leq \left(1-\frac{1}{N}P_1P_2\right)^m.
\end{equation}

Unlike the deterministic rule in the $Breiman$, the splitting point rule in our proposed MRF has randomness, therefore the final selected splitting point can be regarded as a random variable $W_i (i \in \{1,\cdots,m\})$, whose cumulative distribution function is denoted by $F_{W_i}$.

Let $M_1=\min(W_1,1-W_1)$ denotes the size of the root smallest child, we have
\begin{equation}
\begin{aligned}
\Pr(M_1\geq \sigma^{1/m}) 
&= \Pr(\sigma^{1/m} \leq W_1 \leq 1-\sigma^{1/m}) = F_{W_1}(1-\sigma^{1/m})-F_{W_1}(\sigma^{1/m}).
\end{aligned}
\end{equation}

WLOG, we normalize the values of all attributes to the range $[0, 1]$ for each node, then after $m$ splits, the smallest child at the $m$-th layer has the size at least $\sigma$ with the probability at least
\begin{equation}
    \prod_{i=1}^{m}\left(F_{W_i}(1-\sigma^{1/m})-F_{W_i}(\sigma^{1/m})\right).
\end{equation}

Since $F_{Wi}$ is right-continuous at 0 and left-continuous at 1, $\forall \alpha_1>0,\exists \sigma, \alpha>0$ s.t.
$$
    \prod_{i=1}^{m}\left(F_{W_i}(1-\sigma^{1/m})-F_{W_i}(\sigma^{1/m})\right)>(1-\alpha_1)^m>1-\alpha.
$$
%In other words, after $m$ splits, the probability of a node with size $\sigma$ is greater than $1-\epsilon$.

Since the distribution of $\bm{X}$ has a non-zero density, each node has a positive measure with respect to $\mu_{\bm{X}}$. Defining
$$
p=\min_{\mathcal{N}: \mathrm{a\, node\, at} \, m-\mathrm{th}\, \mathrm{level}} \mu_X(\mathcal{N}),
$$
we know $p>0$ since the minimum is over finitely many nodes and each node contains a set of positive measure.

Suppose the data set with size $n$, the number of data points falling in the node $A$, where $A$ denotes the $m$-th level node with measure $p$, follows Binomial$(n, p)$. Note that this node $A$ is the one containing the smallest expected number of samples. WLOG, considering the $\mathrm{partition}\, \mathrm{rate} = 1$, the expectation number of estimation points in $A$ is $np/2$. From Chebyshev’s inequality, we have
\begin{equation}
\begin{array}{l}
\Pr\left(|\mathcal{N}^E(\bm{X})| < k\right)=
\Pr\left(|\mathcal{N}^E(\bm{X})|-\frac{np}{2} < k-\frac{np}{2}\right)\\
\leq \Pr\left(\left||\mathcal{N}^E(\bm{X})|-\frac{np}{2}\right|> \left|k-\frac{np}{2}\right|\right) \leq \frac{np(1-p)}{2|k-\frac{np}{2}|^2}\\
=\frac{p(1-p)}{2n|\frac{k}{n}-\frac{p}{2}|^2},
\end{array}
\end{equation}
where the first inequality holds since  $k-\frac{np}{2}$ is negative as $n \rightarrow \infty$ and the second one is by Chebyshev's inequality.

Since the right hand side goes to zero as $n \rightarrow \infty$, the node contains at least $k$ estimation points in probability. By the stopping condition, the tree will grow infinitely often in probability, $i.e.$, 
\begin{equation}\label{c2}
    m\rightarrow \infty.
\end{equation}

By (\ref{c1}) and (\ref{c2}), the theorem is proved.

\end{proof}

\subsection{The Proof of Lemma 5}
\begin{lemma}
  The impurity-based multinomial distribution of feature selection $\mathcal{M}(\phi)$ is essentially the exponential mechanism of differential privacy, and satisfies $B_{1}$-differential privacy. 
\end{lemma}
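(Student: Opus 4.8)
The plan is to recognize the softmax feature-selection rule as a concrete instance of the exponential mechanism and then read off its privacy budget directly from the general guarantee in equation (\ref{EM}). First I would write the selection probability explicitly: letting $\hat{I}_j$ denote the $j$-th component of the normalized impurity-decrease vector $\hat{\bm{I}}$, the feature $A_j$ is chosen with probability $\phi_j = \frac{\exp{\frac{B_1}{2}\hat{I}_j}}{\sum_{j'=1}^{D}\exp{\frac{B_1}{2}\hat{I}_{j'}}}$. Comparing this with the canonical exponential-mechanism form $\Pr[\mathcal{M}(\mathcal{D})=o]\propto \exp{\frac{\epsilon q(\mathcal{D},o)}{2\triangle q}}$, I would identify the output space $O$ with the feature set $\mathcal{A}$, set the quality function to be the normalized impurity decrease $q(\mathcal{D},A_j)=\hat{I}_j$, and match the exponents via $\frac{\epsilon}{2\triangle q}=\frac{B_1}{2}$, i.e. $\epsilon = B_1\,\triangle q$.

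The crux is then to pin down the sensitivity $\triangle q$ defined in (\ref{sensitivity}). Here the normalization step is doing the essential work: it maps every coordinate into $[0,1]$, so for any dataset we have $q(\mathcal{D},A_j)=\hat{I}_j\in[0,1]$. Consequently, for any feature $A_j$ and any neighboring datasets $\mathcal{D},\mathcal{D}'$ differing in one record, $|q(\mathcal{D},A_j)-q(\mathcal{D}',A_j)|\le 1$, so $\triangle q\le 1$. Taking the worst-case value $\triangle q=1$ yields $\epsilon=B_1$, which is exactly the claimed budget.

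With $\triangle q=1$ the selection distribution coincides with the exponential mechanism that releases a feature with probability $\propto \exp{\frac{B_1 q(\mathcal{D},A_j)}{2}}$, so its $B_1$-differential privacy follows from the standard guarantee of the exponential mechanism. To make the lemma self-contained I would also verify (\ref{DP}) directly: bound the ratio $\Pr[\mathcal{M}(\mathcal{D})=A_j]/\Pr[\mathcal{M}(\mathcal{D}')=A_j]$ by splitting it into a numerator factor $\exp{\frac{B_1(q(\mathcal{D},A_j)-q(\mathcal{D}',A_j))}{2}}\le \exp{\frac{B_1}{2}}$ and a partition-function factor, each controlled by $\exp{\frac{B_1}{2}}$ using $\triangle q\le 1$, so that the product is at most $\exp{B_1}$, matching the definition of $B_1$-differential privacy.

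The main obstacle is the sensitivity bound, and the subtlety to flag is that the normalization constants $\max\bm{I}$ and $\min\bm{I}$ are themselves data-dependent, so $\hat{I}_j$ is \emph{not} a simple Lipschitz function of one raw impurity decrease; tracking how each $I_{i,j}$ responds to a changed record would be messy. The clean justification is precisely that the normalized output is confined to $[0,1]$ regardless of the dataset, which caps the sensitivity at $1$ without any such bookkeeping. A secondary point worth stating explicitly is that using $\triangle q=1$ as an upper bound on the true sensitivity only strengthens the privacy guarantee, so the $B_1$-differential privacy conclusion remains valid.
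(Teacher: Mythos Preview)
Your proposal is correct and follows essentially the same line as the paper: identify the softmax rule with the exponential mechanism using $q=\hat I_j$, argue $\triangle q\le 1$ from the $[0,1]$ range of the normalized scores, and then bound the probability ratio by the product of a numerator term and a partition-function term, each at most $\exp{B_1/2}$. The paper does not discuss the subtlety you raise about the data-dependent normalization constants; your observation that the $[0,1]$ confinement sidesteps this is exactly the implicit justification being used.
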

\begin{proof}\label{proof_B1}
As we all know, the softmax function is
\begin{equation*}
    f(\mathbf{x})_{j}=\frac{\exp{z_{j}}}{\sum_{i=1}^{D}\exp{z_{i}}}.
\end{equation*}
Obviously, the above formula is the same as the exponential mechanism (see the formula (2) in Definition 2). In what follows, we prove $\mathcal{M}(\phi)$ satisfies $B_{1}$-differential privacy.

 For any two neighboring datasets $\mathcal{D}^{S}$ and $\mathcal{D'}^{S}$, and any selected feature $A\in \mathcal{A}$, we can obtain 
\begin{equation}
\frac{\exp{\frac{B_{1}\hat{\bm{I}}(\mathcal{D}^{S}, A)}{2}}}{\exp{\frac{B_{1}\hat{\bm{I}}(\mathcal{D'}^{S}, A)}{2}}} =\exp{\frac{B_{1}\left(\hat{\bm{I}}(\mathcal{D}^{S}, A)-\hat{\bm{I}}(\mathcal{D'}^{S}, A)\right)}{2}}
\leqslant \exp {\frac{B_{1}}{2}},
\end{equation}
where the quality function $\hat{\bm{I}}(\mathcal{D}^{S}, A)$ represents the $a$-th item of the normalized feature vector $\hat{\bm{I}}$ based on the structure points dataset $\mathcal{D}^{S}$, and the corresponding sensitivity is $1$ through the normalized operation ($i.e.$, $\triangle \hat{\bm{I}}=\max_{\forall A, \mathcal{D}^{S}, \mathcal{D'}^{S}}|\hat{\bm{I}}(\mathcal{D}^{S}, A)-\hat{\bm{I}}(\mathcal{D'}^{S}, A)|=1 $).  Accordingly, the privacy of the split feature mechanism satisfies that for any output $A$ of $\mathcal{M}(\phi)$, we can obtain 

\begin{align*}
\small
& \frac{\Pr[\mathcal{M}(\phi, \mathcal{D}^{S})=A]}{\Pr[\mathcal{M}(\phi, \mathcal{D'}^{S})=A]}  =  \frac{\frac{\exp{\frac{B_{1}\hat{\bm{I}}(\mathcal{D}^{S}, A)}{2}}}{\sum_{A'\in \mathcal{A}}\exp{\frac{B_{1}\hat{\bm{I}}(\mathcal{D}^{S}, A')}{2}}}}{\frac{\exp{\frac{B_{1}\hat{\bm{I}}(\mathcal{D'}^{S}, A)}{2}}}{\sum_{A'\in \mathcal{A}}\exp{\frac{B_{1}\hat{\bm{I}}(\mathcal{D'}^{S}, A')}{2}}}}   
=  \frac{\exp{\frac{B_{1}\hat{\bm{I}}(\mathcal{D}^{S}, A)}{2}}}{\exp{\frac{B_{1}\hat{\bm{I}}(\mathcal{D'}^{S}, A)}{2}}}\cdot \frac{\sum_{A'\in \mathcal{A}}\exp{\frac{B_{1}\hat{\bm{I}}(\mathcal{D'}^{S}, A')}{2}}}{\sum_{A'\in \mathcal{A}}\exp{\frac{B_{1}\hat{\bm{I}}(\mathcal{D}^{S}, A')}{2}}}\\
&\leqslant \exp {\frac{B_{1}}{2}}\cdot \left(\frac{\sum_{A'\in \mathcal{A}}\exp{\frac{B_{1}}{2}}\exp{\frac{B_{1}\hat{\bm{I}}(\mathcal{D}^{S}, A')}{2}}}{\sum_{A'\in \mathcal{A}}\exp{\frac{B_{1}\hat{\bm{I}}(\mathcal{D}^{S}, A')}{2}}}\right) \\
& \leqslant \exp {\frac{B_{1}}{2}} \cdot \exp {\frac{B_{1}}{2}} \left(\frac{\sum_{A'\in \mathcal{A}}\exp{\frac{B_{1}\hat{\bm{I}}(\mathcal{D}^{S}, A')}{2}}}{\sum_{A'\in \mathcal{A}}\exp{\frac{B_{1}\hat{\bm{I}}(\mathcal{D}^{S}, A')}{2}}}\right) \\
& = \exp{B_{1}}.
\end{align*}
Therefore, for each layer of a tree, the privacy budget consumed by the split mechanism of features is $B_ {1}$. That is, $\mathcal{M}(\phi)$ satisfies $B_{1}$-differential privacy. 
\end{proof}

\subsection{The Proof of Lemma \ref{B2_DP}}
\begin{lemma}\label{B2_DP}
  The impurity-based multinomial distribution $\mathcal{M}(\varphi)$ of split value selection is essentially the exponential mechanism of differential privacy, and satisfies $B_{2}$-differential privacy. 
\end{lemma}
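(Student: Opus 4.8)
The plan is to follow the proof of the feature-selection lemma (Lemma~\ref{B1_DP}) line for line, substituting the set of candidate split values of the chosen feature $A_{j}$ for the feature set $\mathcal{A}$, the normalized value-impurity vector $\hat{\bm{I}}^{(j)}$ for the normalized feature-impurity vector $\hat{\bm{I}}$, and the budget $B_{2}$ for $B_{1}$. The first move is to identify $\mathcal{M}(\varphi)$ with the exponential mechanism of formula~(2): taking the quality function to be $q(\mathcal{D}^{S}, v) = \hat{\bm{I}}^{(j)}(\mathcal{D}^{S}, v)$, the entry of the normalized impurity-decrease vector corresponding to the split value $v$, the softmax rule selects $v$ with probability proportional to $\exp{\frac{B_{2}\hat{\bm{I}}^{(j)}(\mathcal{D}^{S}, v)}{2}}$, which is precisely the exponential-mechanism sampling distribution with privacy parameter $B_{2}$ and sensitivity $\triangle q$.

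Second, I would show that the sensitivity equals one. Since the min--max normalization rescales every entry of $\hat{\bm{I}}^{(j)}$ into $[0,1]$, the largest possible change of any single quality value between two neighboring datasets is bounded by $1$, i.e. $\triangle q = \max_{\forall v, \mathcal{D}^{S}, \mathcal{D'}^{S}}|\hat{\bm{I}}^{(j)}(\mathcal{D}^{S}, v) - \hat{\bm{I}}^{(j)}(\mathcal{D'}^{S}, v)| = 1$.

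With unit sensitivity in hand, I would reproduce the two-step ratio bound. For any fixed output $v$, I would factor the ratio $\Pr[\mathcal{M}(\varphi, \mathcal{D}^{S}) = v]/\Pr[\mathcal{M}(\varphi, \mathcal{D'}^{S}) = v]$ into the ratio of numerators, which is $\exp{\frac{B_{2}(\hat{\bm{I}}^{(j)}(\mathcal{D}^{S}, v) - \hat{\bm{I}}^{(j)}(\mathcal{D'}^{S}, v))}{2}} \leq \exp{\frac{B_{2}}{2}}$, times the ratio of normalizing sums, each summand of which changes by at most a factor $\exp{\frac{B_{2}}{2}}$; multiplying the two bounds yields $\exp{B_{2}}$, which is exactly the statement of $B_{2}$-differential privacy.

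Since the argument is mechanical once the quality function is identified, I expect no substantive obstacle; the only point that needs genuine care is the unit-sensitivity claim, which rests entirely on the normalization forcing every quality value into $[0,1]$. Here I would also flag the degenerate case $\max \bm{I}^{(j)} = \min \bm{I}^{(j)}$, in which the normalization is ill-defined; I would argue that this case leaks nothing, because the induced distribution is then uniform over the candidate values regardless of the data, and so it can be dispatched by convention without affecting the bound.
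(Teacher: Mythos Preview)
Your proposal is correct and mirrors the paper's own proof almost verbatim: identify the softmax rule as the exponential mechanism with quality function $\hat{\bm{I}}^{(j)}(\mathcal{D}^{S},\cdot)$ and unit sensitivity from the min--max normalization, then bound the output-probability ratio by $\exp{\frac{B_{2}}{2}}\cdot\exp{\frac{B_{2}}{2}}=\exp{B_{2}}$. The only addition you make is the remark on the degenerate case $\max\bm{I}^{(j)}=\min\bm{I}^{(j)}$, which the paper does not treat.
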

\begin{proof}
Similar to the proof of Lemma \ref{B1_DP}, the split value selection $\mathcal{M}(\varphi)$ is essentially the exponential mechanism of differential privacy. For any two neighboring datasets $\mathcal{D}^{S}$ and $\mathcal{D'}^{S}$, and any selected split value $a_{j}[i]\in a_{j}=\{a_{j}[1],\ldots,a_{j}[m]\}$ of the feature $A_{j}$, we can obtain 
$$
\small
\frac{\exp{\frac{B_{2}\hat{\bm{I}}^{(j)}(\mathcal{D}^{S}, a_{j}[i])}{2}}}{\exp{\frac{B_{2}\hat{\bm{I}}^{(j)}(\mathcal{D'}^{S}, a_{j}[i])}{2}}} = \exp{\frac{B_{2}\left(\hat{\bm{I}}^{(j)}(\mathcal{D}^{S}, a_{j}[i])-\hat{\bm{I}}^{(j)}(\mathcal{D'}^{S}, a_{j}[i])\right)}{2}} \leqslant \exp {\frac{B_{2}}{2}},
$$
where the quality function $\hat{\bm{I}}^{(j)}(\mathcal{D}^{S}, a_{j}[i])$ represents the $i$-th item of the normalized feature vector $\hat{\bm{I}}^{(j)}$ based on the structure points dataset $\mathcal{D}^{S}$, and the corresponding sensitivity is $1$ through the normalized operation.  

Accordingly, for any output $a_{j}[i]$ of $\mathcal{M}(\varphi)$, we can obtain

\begin{align*}
& \frac{\Pr\left[\mathcal{M}(\varphi, \mathcal{D}^{S})=a_{j}[i]\right]}{\Pr\left[\mathcal{M}(\varphi, \mathcal{D'}^{S})=a_{j}[i]\right]} =  \frac{\frac{\exp{\frac{B_{2}\hat{\bm{I}}^{(j)}(\mathcal{D}^{S}, a_{j}[i])}{2}}}{\sum_{a_{j}[k]\in A_{j}}\exp{\frac{B_{2}\hat{\bm{I}}^{(j)}(\mathcal{D}^{S}, a_{j}[k])}{2}}}}{\frac{\exp{\frac{B_{2}\hat{\bm{I}}^{(j)}(\mathcal{D'}^{S}, a_{j}[i])}{2}}}{\sum_{a_{j}[k]\in A_{j}}\exp{\frac{B_{2}\hat{\bm{I}}^{(j)}(\mathcal{D'}^{S}, a_{j}[k])}{2}}}}   \\
&=  \frac{\exp{\frac{B_{2}\hat{\bm{I}}^{(j)}(\mathcal{D}^{S}, a_{j}[i])}{2}}}{\exp{\frac{B_{2}\hat{\bm{I}}^{(j)}(\mathcal{D'}^{S}, a_{j}[i])}{2}}}\cdot \frac{\sum_{a_{j}[k]\in A_{j}}\exp{\frac{B_{2}\hat{\bm{I}}^{(j)}(\mathcal{D'}^{S}, a_{j}[k])}{2}}}{\sum_{a_{j}[k]\in A_{j}}\exp{\frac{B_{2}\hat{\bm{I}}^{(j)}(\mathcal{D}^{S}, a_{j}[k])}{2}}}\\
&\leqslant \exp {\frac{B_{2}}{2}}\cdot \left(\frac{\sum_{a_{j}[k]\in A_{j}}\exp{\frac{B_{2}}{2}}\exp{\frac{B_{2}\hat{\bm{I}}^{(j)}(\mathcal{D}^{S}, a_{j}[k])}{2}}}{\sum_{a_{j}[k]\in A_{j}}\exp{\frac{B_{2}\hat{\bm{I}}^{(j)}(\mathcal{D}^{S}, a_{j}[k])}{2}}}\right) \\
& \leqslant \exp {\frac{B_{2}}{2}} \exp {\frac{B_{2}}{2}} \left(\frac{\sum_{a_{j}[k]\in A_{j}}\exp{\frac{B_{2}\hat{\bm{I}}^{(j)}(\mathcal{D}^{S}, a_{j}[k])}{2}}}{\sum_{a_{j}[k]\in A_{j}}\exp{\frac{B_{2}\hat{\bm{I}}^{(j)}(\mathcal{D}^{S}, a_{j}[k])}{2}}}\right) \\
& = \exp{B_{2}}.
\end{align*}
Thus, the selection mechanism of split value for a specific feature satisfies $B_ {2}$-differential privacy.  
\end{proof}

\subsection{The Proof of Lemma 6}
\begin{lemma}\label{B3_DP}
  The label selection of each leaf in a tree satisfies $B_{3}$-differential privacy. 
\end{lemma}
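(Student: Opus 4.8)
The plan is to recognize the randomized leaf-labeling rule as an instance of the exponential mechanism and then replicate the two-factor bounding argument used in the proofs of Lemma \ref{B1_DP} and Lemma \ref{B2_DP}. First I would identify the quality (score) function: for a leaf containing $\bm{x}$, the mechanism outputs label $c$ with probability proportional to $\exp{\frac{B_{3}\eta^{(c)}(\bm{x})}{2}}$, so the natural score is $q(\mathcal{D}^E, c) = \eta^{(c)}(\bm{x})$, the empirical frequency of class $c$ among the estimation points in the leaf, as defined in (\ref{p1}). Comparing this density with the generic exponential-mechanism form $\exp{\frac{\epsilon q}{2\triangle q}}$ of Definition 2 shows that the labeling rule is exactly the exponential mechanism with privacy budget $\epsilon = B_{3}$, provided the sensitivity $\triangle q$ equals $1$.

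The second step is to bound the sensitivity of $q$. Because $\eta^{(c)}(\bm{x})$ is a normalized class frequency it always lies in $[0,1]$, so for any two neighbouring estimation sets $\mathcal{D}^E$ and $\mathcal{D'}^E$ differing in a single record we have $\left|\eta^{(c)}(\mathcal{D}^E) - \eta^{(c)}(\mathcal{D'}^E)\right| \le 1$, hence $\triangle q \le 1$. This is the only place where the specific form of the score enters, and it is the step I would watch most carefully: unlike the impurity scores of Lemmas \ref{B1_DP} and \ref{B2_DP}, the range bound here is inherited directly from the definition of $\eta^{(c)}$ in (\ref{p1}) rather than from an explicit min--max rescaling, so I would justify $\eta^{(c)} \in [0,1]$ from its definition rather than assume a normalization of a score vector.

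With $\triangle q \le 1$ in hand, the final step is the standard ratio estimate. For a fixed output label $c$ and neighbouring sets $\mathcal{D}^E, \mathcal{D'}^E$ I would decompose the ratio of selection probabilities into a single-term factor and a normalizing-sum factor,
\begin{equation*}
\frac{\Pr[\mathcal{M}(\mathcal{D}^E)=c]}{\Pr[\mathcal{M}(\mathcal{D'}^E)=c]} = \frac{\exp{\frac{B_{3}\eta^{(c)}(\mathcal{D}^E)}{2}}}{\exp{\frac{B_{3}\eta^{(c)}(\mathcal{D'}^E)}{2}}} \cdot \frac{\sum_{c'} \exp{\frac{B_{3}\eta^{(c')}(\mathcal{D'}^E)}{2}}}{\sum_{c'} \exp{\frac{B_{3}\eta^{(c')}(\mathcal{D}^E)}{2}}},
\end{equation*}
bound the first factor by $\exp{\frac{B_{3}}{2}}$ using the sensitivity estimate, and bound the second factor by $\exp{\frac{B_{3}}{2}}$ by pulling $\exp{\frac{B_{3}}{2}}$ out of every numerator term, exactly as in the displayed chain of inequalities in the proof of Lemma \ref{B1_DP}. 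Multiplying the two factors yields the bound $\exp{B_{3}}$, which is precisely $B_{3}$-differential privacy. Since the entire argument is a transcription of the earlier exponential-mechanism lemmas with $\eta^{(c)}$ playing the role of $\hat{\bm{I}}$, I expect no genuine obstacle beyond the sensitivity justification noted above.
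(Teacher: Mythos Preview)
Your proposal is correct and mirrors the paper's proof almost line for line: identify $\eta^{(c)}$ as the exponential-mechanism score, bound its sensitivity by $1$ because it is a probability, and carry out the same two-factor ratio estimate used in Lemmas~\ref{B1_DP} and~\ref{B2_DP}. The only addition in the paper is a closing sentence observing that the leaves partition $\mathcal{D}^E$ into disjoint subsets, so by parallel composition (Property~\ref{PC}) the per-leaf guarantee lifts to the entire tree's prediction mechanism at the same $B_3$ budget; you may wish to append that remark, since Theorem~\ref{privacy} relies on the tree-level statement.
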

\begin{proof}
For any two neighboring datasets $\mathcal{D}^{E}$ and $\mathcal{D'}^{E}$, and any predicted label $c\in \mathcal{K}=\{1,2,\ldots,K\}$ of $\bm x$ in a specific leaf, we can obtain
\begin{equation}
\frac{\exp{\frac{B_{3}\eta(\mathcal{D}^{E}, c)}{2}}}{\exp{\frac{B_{3}\eta(\mathcal{D'}^{E}, c)}{2}}} = \exp{\frac{B_{3}\left(\eta(\mathcal{D}^{E}, c)-\eta(\mathcal{D'}^{E}, c)\right)}{2}} \leqslant \exp {\frac{B_{3}}{2}},
\end{equation}
where the quality function $\eta(\mathcal{D}^{E}, c)$ is equivalent to $\eta^{c}(\bm{x})$ and represents the predict label $c$ of the sample $\bm{x}$ based on the estimation points dataset $\mathcal{D}^{E}$. It is worth noting that $\eta(\mathcal{D}^{E}, c)$ is the empirical probability that sample $\bm{x}$ has the label $c$, and thus the corresponding sensitive is $1$. 
Then, for any output $c\in\{1,2,\ldots,K\}$ of $h(\bm x)$, we can obtain
\begin{align*}
& \frac{\Pr[h(\bm x, \mathcal{D}^{E})=c]}{\Pr[h(\bm x, \mathcal{D'}^{E})=c]}  =  \frac{\frac{\exp{\frac{B_{3}\eta(\mathcal{D}^{E}, c)}{2}}}{\sum_{c'\in \mathcal{K}}\exp{\frac{B_{3}\eta(\mathcal{D}^{E}, c')}{2}}}}{\frac{\exp{\frac{B_{3}\eta(\mathcal{D'}^{E}, c)}{2}}}{\sum_{c'\in \mathcal{K}}\exp{\frac{B_{3}\eta(\mathcal{D'}^{E}, c')}{2}}}} \\ & =  \frac{\exp{\frac{B_{3}\eta(\mathcal{D}^{E}, c)}{2}}}{\exp{\frac{B_{3}\eta(\mathcal{D'}^{E}, c)}{2}}}\cdot \frac{\sum_{c'\in \mathcal{K}}\exp{\frac{B_{3}\eta(\mathcal{D'}^{E}, c')}{2}}}{\sum_{c'\in \mathcal{K}}\exp{\frac{B_{3}\eta(\mathcal{D}^{E}, c')}{2}}}\\
&\leqslant \exp {\frac{B_{3}}{2}}\cdot \left(\frac{\sum_{c'\in \mathcal{K}}\exp{\frac{B_{3}}{2}}\exp{\frac{B_{3}\eta(\mathcal{D}^{E}, c')}{2}}}{\sum_{c'\in \mathcal{K}}\exp{\frac{B_{3}\eta(\mathcal{D}^{E}, c')}{2}}}\right) \\
& \leqslant \exp {\frac{B_{3}}{2}} \cdot \exp {\frac{B_{3}}{2}} \left(\frac{\sum_{c'\in \mathcal{K}}\exp{\frac{B_{3}\eta(\mathcal{D}^{E}, c')}{2}}}{\sum_{c'\in \mathcal{K}}\exp{\frac{B_{3}\eta(\mathcal{D}^{E}, c')}{2}}}\right) \\
& = \exp{B_{3}}.
\end{align*}

\begin{algorithm}[!ht]
   \caption{Decision Tree Training in MRF: $MTree()$ }
   \label{alg-dt}
\begin{algorithmic}[1]
   \STATE {\bfseries Input:} Structure points $\mathcal{D}^S$, estimation points $\mathcal{D}^E$ and hyper-parameters $k$, $B_1$, $B_2$.
   \STATE {\bfseries Output:} A decision tree $T$ in MRF.
        \IF{$|\mathcal{D}^E|>k$}        
        \STATE Calculate the impurity decrease of all possible split points $v_{ij}$.
        \STATE Select the largest impurity decrease of each feature to create
        a vector $\bm{I}$, calculate the normalized vector $\hat{\bm{I}}$, and compute the probabilities $\bm{\phi} = \mathrm{softmax}(\frac{B_1}{2}\hat{\bm{I}})$.
        \STATE Select a split feature randomly according to the multinomial distribution $M(\bm{\phi})$.
        \STATE Calculate the normalized vector $\hat{\bm{I}}^{(j)}$ for the selected split feature $f_{j}$, and compute the probabilities $\bm{\varphi} = \mathrm{softmax}(\frac{B_2}{2}\hat{\bm{I}}^{(j)})$.
        \STATE Select a split value randomly according to the multinomial distribution $M(\bm{\varphi})$. $\mathcal{D}^S$ and $\mathcal{D}^E$ are correspondingly split into two disjoint subsets $\mathcal{D}^{S_l}, \mathcal{D}^{S_r}$ and $\mathcal{D}^{E_l}, \mathcal{D}^{E_r}$, respectively.
       % \IF{$|\mathcal{D}^{E_l}|\ge k$ and $|\mathcal{D}^{E_r}|\ge k$}
        \STATE $T.leftchild \leftarrow MTree(\mathcal{D}^{S_l}, \mathcal{D}^{E_l},k,B_1,B_2)$
        \STATE $T.rightchild \leftarrow MTree(\mathcal{D}^{S_r}, \mathcal{D}^{E_r},k,B_1,B_2)$
        %\ENDIF
        \ENDIF
\STATE {\bfseries Return:} A decision tree $T$ in MRF
\end{algorithmic}
\end{algorithm}

\begin{figure}[ht]
 \centering
 \includegraphics[width=0.6\textwidth]{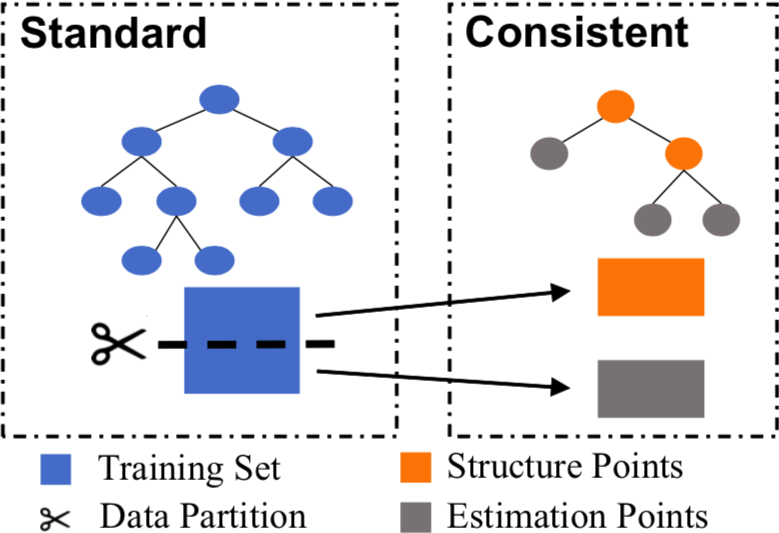}
 \caption{An illustration of the data partition.}
 \label{partition}
\end{figure}

Since each leaf divides the dataset $\mathcal{D}^E$ into disjoint subsets, according to Property 2, the prediction mechanism of the each tree satisfies $B_ {3}$-differential privacy.  
\end{proof}

\subsection{The Proof of Theorem \ref{privacy}}
\begin{theorem}
The proposed MRF satisfies $\epsilon$-differential privacy when the hyper-parameters $B_{1}$, $B_{2}$ and $B_{3}$ satisfy $B_{1}+B_{2}=\epsilon /(d\cdot t)$ and $B_{3}=\epsilon /t$, where $t$ is the number of trees in the MRF and $d$ is the depth of a tree such that $d\leqslant \mathcal{O}(\frac{|\mathcal{D}^E|}{k})$.
\end{theorem}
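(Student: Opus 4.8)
The plan is to prove the bound by pure composition, assembling the three per-operation guarantees already established---feature selection is $B_1$-differentially private (Lemma~\ref{B1_DP}), value selection is $B_2$-differentially private (Lemma~\ref{B2_DP}), and leaf-label selection is $B_3$-differentially private (Lemma~\ref{B3_DP})---by repeatedly applying the sequential rule (Property~\ref{SC}) and the parallel rule (Property~\ref{PC}). No new probabilistic estimates are needed; the entire argument is an accounting of how the total budget spent on $\mathcal{D}$ accumulates as we move from a single node, to a level, to a tree, and finally to the forest.

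First I would account for one tree's structure, which is built entirely on $\mathcal{D}^S$. At a single node the feature- and value-selection steps are applied one after the other to the same node data, so by sequential composition (Property~\ref{SC}) one node split costs $B_1+B_2$. The structural fact that makes the bound tight is that the nodes at any fixed level see pairwise-disjoint data: every record is routed to exactly one child at each split, so by induction the nodes of a level partition $\mathcal{D}^S$. Parallel composition (Property~\ref{PC}) then charges an entire level only the maximum over its nodes, i.e.\ $B_1+B_2$, rather than the sum. Since the $d$ levels are produced one after another, sequential composition across levels gives $d\,(B_1+B_2)$-differential privacy for the structure of a single tree.

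Next I would fold in prediction and then the forest. Leaf-label selection runs on $\mathcal{D}^E$, and because the leaves likewise partition $\mathcal{D}^E$, parallel composition (Property~\ref{PC}) caps the prediction of one tree at $B_3$, independent of the number of leaves. As $\mathcal{D}^S$ and $\mathcal{D}^E$ are disjoint by the partition of Section~\ref{TDSP}, any single changed record of $\mathcal{D}$ perturbs only one of the two stages, so structure and prediction combine by parallel composition and one tree is $\max\!\big(d\,(B_1+B_2),\,B_3\big)$-differentially private on $\mathcal{D}$. The $t$ trees, however, each re-partition the whole $\mathcal{D}$ independently and therefore touch overlapping data, forcing sequential composition: the forest costs $t\cdot\max\!\big(d\,(B_1+B_2),\,B_3\big)$, and the final majority vote is a deterministic post-processing of the tree outputs that adds nothing. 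Substituting $B_1+B_2=\epsilon/(d\cdot t)$ and $B_3=\epsilon/t$ equalizes both terms of the maximum to $\epsilon/t$, yielding the claimed $\epsilon$-differential privacy.

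The part deserving the most care---and the main obstacle---is justifying every use of parallel composition, since replacing a $\max$ by a sum would inflate the budget uncontrollably: I must confirm that same-level nodes really receive disjoint data, that the leaves partition $\mathcal{D}^E$, and that $\mathcal{D}^S\cap\mathcal{D}^E=\emptyset$. The hypothesis $d\le\mathcal{O}(|\mathcal{D}^E|/k)$ plays a complementary role, ensuring the allocation $B_1+B_2=\epsilon/(d\cdot t)$ is legitimate: the minimum-leaf-size-$k$ stopping rule bounds the number of leaves, hence the depth, of every tree, so $d$ is a finite upper bound fixed in advance and the per-level budget can be set without the total ever exceeding $\epsilon$.
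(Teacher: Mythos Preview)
Your proposal is correct and follows essentially the same composition argument as the paper's own proof: sequential composition of feature and value selection per node, parallel composition across same-level nodes and across the disjoint $\mathcal{D}^S$/$\mathcal{D}^E$, sequential composition across the $d$ levels and across the $t$ trees, and substitution of the budget constraints. If anything, your version is more careful than the paper's, since you explicitly justify the parallel-composition steps (same-level nodes partitioning $\mathcal{D}^S$, leaves partitioning $\mathcal{D}^E$) and invoke post-processing for the final majority vote, whereas the paper leaves these implicit.
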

\begin{proof}
Based on Property 1 together with Lemma \ref{B1_DP} and Lemma \ref{B2_DP} , the privacy budget consumed for each layer of a tree is $B_ {1}+B_{2}=\epsilon /(d\cdot t)$. Since the depth of a tree is $d$, the total privacy budget consumed by the generation of tree structure is $d(B_{1}+B_{2})=\epsilon/ t$. Since the datasets $\mathcal{D}^S$ and $\mathcal{D}^E$ are disjoint, according to Property 2, the total privacy budget of a tree is $\max\{d(B_{1}+B_{2}), B_{3}\}=\epsilon/t$. 

As a result, the consumed privacy budget of the MRF containing $t$ trees is $\frac{\epsilon}{t}\cdot t=\epsilon$, which implies that the MRF satisfies $\epsilon$-differential privacy.
\end{proof}

\section{More Details about the Training Process of MRF}
We provide more details about the training process of MRF. The illustration of partition process, and the pseudo code of training process is shown in Figure \ref{partition} and Algorithm \ref{alg-dt}, respectively.

\begin{figure*}[ht]
 \centering
 \includegraphics[width=0.8\textwidth]{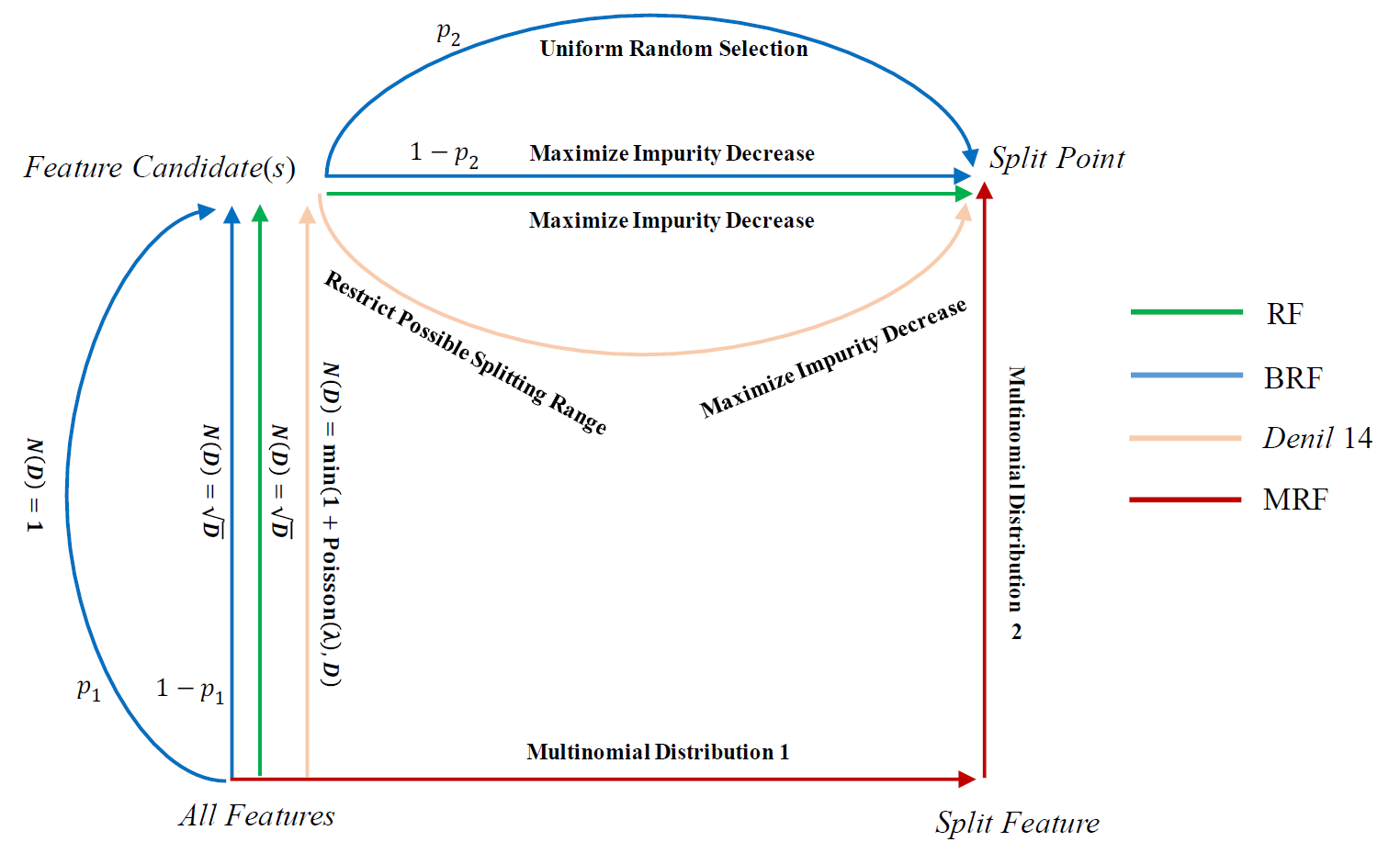}
 \caption{A diagram showing the differences among the standard RF, {\it Denil14}, BRF, and MRF. For {\it Denil14} and BRF, they choose split point in a greedy way mostly, while holding a small or even negligible probability in selecting anther point randomly. Their split process is very similar to the one of standard random forests. In contrast, in MRF, two impurity-based multinomial distributions are used for randomly selecting a split feature and a specific split point respectively. All randomness in consistent RF variants is aiming to fulfill the consistency, whereas the one in MRF is more reasonable.}
 \label{difference}
\end{figure*}

\section{The Discussion of Parameter Settings in MRF}\label{parameter_setting}
Firstly, we discuss the parameters settings from the aspect of privacy-preservation by setting $B_{1}$, $B_{2}$ and $B_{3}$ to ensure that MRF satisfies $\epsilon$-differential privacy. 

Suppose the number of trees and the depth of each tree are $t$ and $d$, respectively. To fulfill the $\epsilon$-differential privacy, we can evenly allocate the total privacy budget $\epsilon$ to each tree, $i.e.$, the privacy budget of each tree is $\epsilon/ t$. For each tree, the upper bound of depth $d$ is approximately $\mathcal{O}(\frac{|\mathcal{D}^E|}{k})$, $i.e.$, $d\leqslant \mathcal{O}(\frac{|\mathcal{D}^E|}{k})$. Accordingly, we can directly set $d=\frac{|\mathcal{D}^E|}{k}$ and evenly allocate the total privacy budget $\epsilon/ t$ to each layer, $i.e.$, the privacy budget of each layer is $\epsilon/(d\cdot t)$. As such, we can set $B_{1}$, $B_{2}$ and $B_{3}$ to $B_{1}+B_{2}=\epsilon/(d\cdot t)$ and $B_{3}=\epsilon/ t$ to ensure that MRF satisfies $\epsilon$-differential privacy. 

From another perspective, the hyper-parameters play a role in regulating the relative probabilities under which the 'best' candidate is selected. Specifically, the larger $B_1$, the smaller noise will be added in $\phi$, and thus the split feature with the largest impurity decrease $\hat{\bm{I}}_{i}$ ($i=1,2,\ldots,D$) has a higher probability of being selected. Similarly, $B_2$ and $B_{3}$ control the noises added to the split value selection and the label selection, respectively. In addition, if $ B_3 \rightarrow \infty$, regardless of the training set partitioning, when $B_1=B_2=0$, all features and split values have the same probability to be selected, and therefore the MRF would become a completely random forest \cite{Liu2008}; when $B_1, B_2 \rightarrow \infty$, there is no added noise. In this case, the MRF would become the Breiman's original RF, whose set of candidate features always contains all the features.

\section{Comparison among Different RFs}\label{compare}
In this section, we first explain the differences between the MRF and the
Breiman's random forest (Breiman2001), and then describe the
similarities and differences between the MRF and two other recently proposed consistent RF variants, {\it Denil14} \cite{denil2014} and BRF
\cite{Wang2018}. The differences are summarized in the Figure \ref{difference}.

In the standard RF, node split has two steps: 1) a restrictive step in
which only a subset of randomly selected features are considered, and 2) a
greedy step in which the best combination of split feature and split
value is chosen so that the impurity decrease is maximized after the split.
Both steps may have limitations.  The restriction in the first step is to
increase the diversity of the trees but at the cost of missing an informative
feature.  The greedy approach in the second step may be too exclusive,
especially when there are several choices that are not significantly
different from each other.  The MRF improves on both steps.  First, we
introduce an approach to allow a more informed selection of split
features.  We compute the largest possible impurity decrease for every
feature to represent the potentials of the features, and then use them as a
basis to construct a multinomial distribution for feature selection.  Second,
once a split feature is selected, we introduce randomness in the
selection of a split value for the feature.  We consider all possible
values and use their corresponding impurity decreases as a basis to construct
a second multinomial distribution for split point selection.  This way,
the best split point in the standard RF still has the highest probability
to be selected although it is not always selected as in the greedy approach.
These new procedures allow us to achieve a certain degree of randomness
necessary for the proof of consistency and to still maintain the overall
quality of the trees.  In the MRF, the diversity of the trees is achieved
through the randomness in both steps (plus data partitioning); in contrast, in the standard RF, the diversity is
achieved through the restrictive measure in the first step (plus bootstrap
sampling).

All three RF variants -- {\it Denil14}, BRF and MRF -- have the same
procedure of partitioning the training set into two subsets to be used
separately for tree construction and for prediction. This procedure is
necessary for the proof of consistency, which is in contrast to the standard RF, where a bootstrap dataset is used for both tree construction and prediction.

For feature selection, all three RF variants have a procedure to ensure that
every feature has a chance to be splitted, which is necessary
for proving consistency. However, the MRF is completely different from the
other two methods in this aspect. The other two methods introduce a simple
random distribution (a Poisson distribution in {\it Denil14} and a Bernoulli
distribution in BRF), but they are restrictive as they use a random subset of
features as in the standard RF.  In the MRF, the randomness is built into an
impurity-based multinomial distribution defined over all the features.

When determining the split point, all three RF variants have some level
of randomness to ensure that every split point has a chance to be used,
which is again necessary for proving consistency.  However, all the methods
except the MRF involve a greedy step, in which the final selection is
deterministic.  That is, whenever there is a pool of candidate split
points, those methods always select the ``best'' one that has the largest
impurity decrease.  Specifically, in {\it Denil14}, a pool of randomly
selected candidate split points is created and then searched for the
``best'' one.  In BRF, a Bernoulli distribution is used so that with a small
probability, the split is based on a randomly selected point, and with a
large probability, the choice is the same as in the standard RF, that is, the
``best'' among all possible split points. This greedy approach of
choosing the ``best'' split point at every node may be too exclusive.
This is especially true when there are several candidate split points
that are not significantly different from each other.  In this case, it may
not be a good strategy to choose whichever one that happens to have the
largest impurity decrease while ignoring all others that are nearly as good. This issue is solved by the non-greedy approach in the MRF, in which a
split point is chosen randomly according to another impurity-based
multinomial distribution. This way, the ``best'' split point has the
highest probability to be chosen, but other candidate split points that
are nearly as good as the ``best'' one will also have a good chance to be
selected.  This flexibility in split point selection in the MRF may
partially explain its good performance to be shown in the next section.

In addition, the prediction of the MRF trees are also different from those of {\it Denil14}, BRF and RF. Specifically the predicted label is randomly selected with a probability proportional to $\exp {\frac{B_{3}\eta^{(c)} (\bm{x})}{2}}$ in MRF, whereas the prediction of other three methods are deterministic. This modification is to ensure that the prediction process also satisfies the exponential mechanism, therefore the privacy-preservation can be analyzed under the differential privacy framework.

\section{The Description of UCI Benchmark Datasets}
We conduct machine learning experiments on multiple UCI datasets used in previous consistent RF works \cite{denil2014,Wang2018,haghiri2018}. These datasets cover a wide range of sample size and feature dimensions, and therefore they are representative for evaluating the performance of different algorithms. The description of used datasets is shown in Table \ref{Dataset}.

\begin{table}[ht]
\caption{The description of UCI benchmark datasets.}
\label{Dataset}
\begin{center}
\begin{small}
\begin{sc}
\begin{tabular}{lccc}
\hline
Dataset & Samples & Features & Classes\\
\hline
Zoo     &101 & 17 & 7\\
Hayes   &132 & 5 & 3\\
Echo    &132 & 12 & 2\\
Hepatitis &155 &19 &2\\
Wdbc    &569& 39 & 2\\
Transfusion &748 & 5 & 2\\
Vehicle    &946 & 18 & 4\\
Mammo  &961 &6 &2\\
Messidor &1151 &19 &2\\
Website &1353 &9 &3\\
Banknote &1372 &4 &2\\
Cmc    & 1473 & 9 & 3\\
Yeast &1484 &8 &10\\
Car    & 1728 & 6 & 4\\
Image    & 2310 & 19 & 7\\
Chess    & 3196 & 36 & 2\\
Ads    & 3729 & 1558 & 2\\
Wilt   &4839 &5 &2\\
Wine-Quality &4898 &11 &7\\
Phishing &11055& 31& 2\\
Nursery & 12960 & 9 & 5\\
Connect-4    & 67557 &42 & 3\\
\hline
\end{tabular}
\end{sc}
\end{small}
\end{center}
\vskip -0.2in
\end{table}

\begin{table}[ht]
\caption{Test accuracy ($\%$) of a tree in MFR in terms of different privacy budgets $B_1$, $B_2$, and $B_3$.}
\small
\label{tab:DP}
\begin{center}
    \begin{tabular}{|c|c|c|c|c|c|}
    \hline
     \multirow{2}{*}{$B_{3}$}  & \multirow{2}{*}{$B_{1}$} & \multirow{2}{*}{$B_{2}$} & \multicolumn{3}{|c|}{Datasets}  \\
     \cline{4-6}
         & & & WDBC & CMC & CONNECT-4 \\
     \hline
     1 & 0.05 & 0.05& 94.71 & 52.58 & 72.20 \\
     \hline
     5 & 0.25 & 0.25 & 94.95 & 52.79 & 74.12 \\
     \hline
     10 & 0.5 & 0.5 & 94.93 & 52.94 & 74.84 \\
     \hline
     20 & 1 & 1 & 95.21 & 53.25 & 76.84\\
     \hline
    \end{tabular}
\end{center}
\end{table}

\section{Performance of Differential Privacy}
In this section, we simulate the performance of each tree in our MRF for different privacy budgets. We conduct the experiments in WDBC, CMC and CONNECT-4 dataset, which is the representative of small, medium and large datasets, respectively. Speficially, according to Theorem \ref{privacy}, each tree in the MRF satisfies $B_{3}$-differential privacy, and the privacy budget consumed for each layer of a tree is $B_{1}+B_{2}$, which satisfies $B_{1}+B_{2}=B_{3}/d$. Therefore, Table \ref{tab:DP} presents the performance changes with respect to $B_3$. In the experiments, we set $B_{3}=1$, $5$, $10$, and $20$ respectively. Since we focus on the trade-off between the accuracy and privacy, we can simply set $B_{1}=B_{2}$. Besides,  we observe that the depth of each tree in MRF constructed based on selected datasets is no more than $10$, therefore we directly set $d=10$.

From the table, we can see that when $B_{3}$ increases, the performance of each tree increases, which meets the changing trend of differential privacy. Specifically, when the privacy budget is relatively low, the added noise is relatively high, which results in reduced performance. On the contrary, when the privacy budget is relatively high, the added noise is relatively low, and thus the corresponding performance will increase. Besides, we can observe that when $B_{3}$ decreases, the performance is hardly reduced. Thus, in practice, we can set $B_{3}$ to be relatively low, to ensure better security without reducing performance.

\begin{figure*}[ht]
%\vskip -0.2in
\centering
\subfigure[ECHO]{
\label{fig3a}
\includegraphics[width=0.28\textwidth]{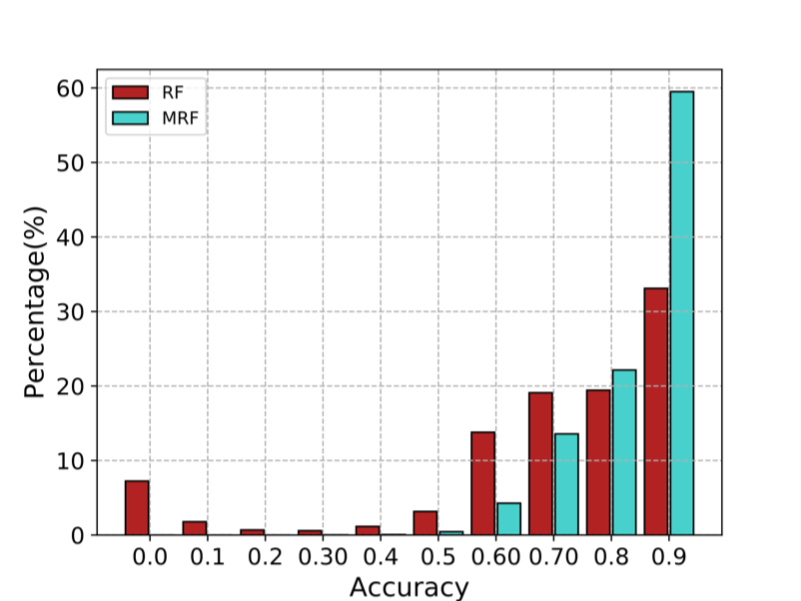}}
%\hspace{1in}
\subfigure[CMC]{
\label{fig3:subfig:b}
\includegraphics[width=0.28\textwidth]{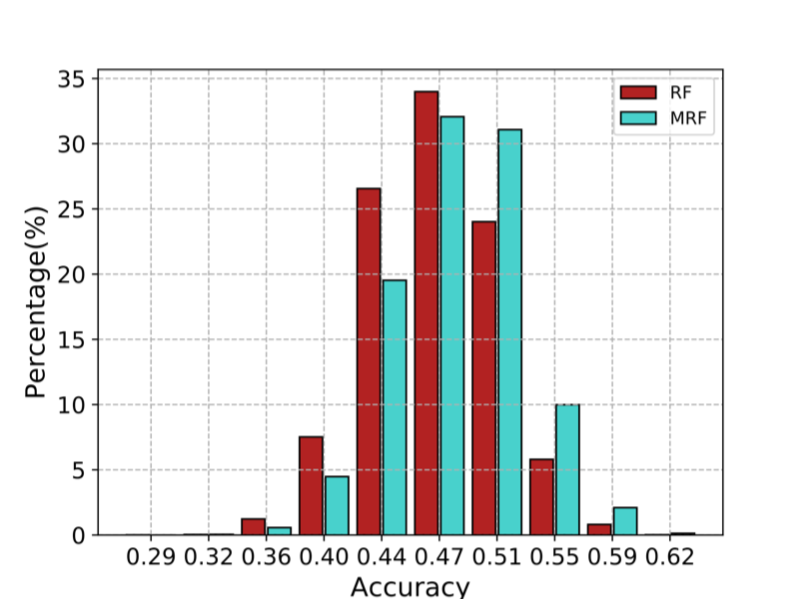}}
\subfigure[ADS]{
\label{fig3:subfig:c}
\includegraphics[width=0.28\textwidth]{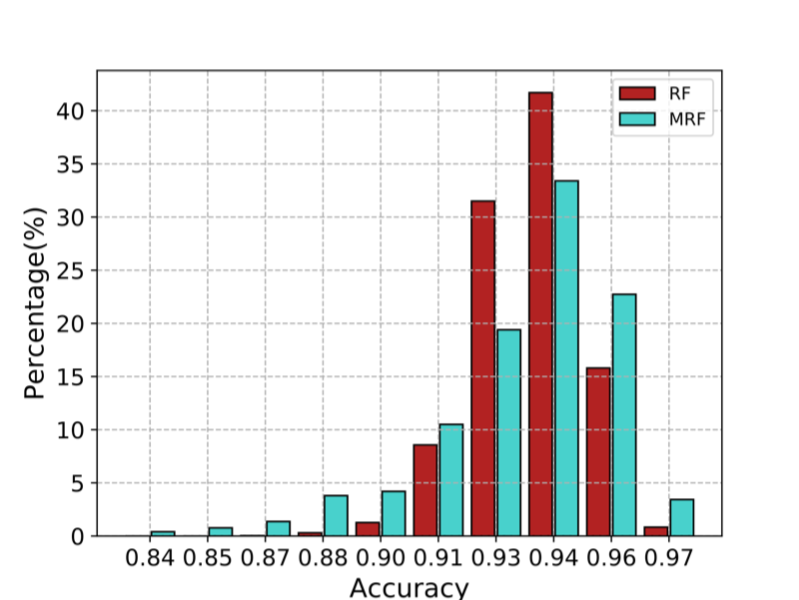}}
\subfigure[WDBC]{
\label{fig3:subfig:d}
\includegraphics[width=0.28\textwidth]{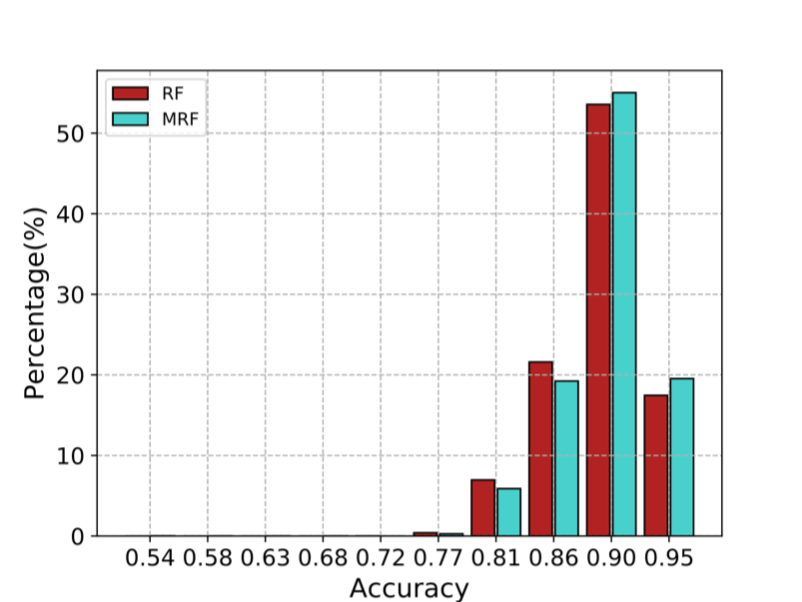}}
\subfigure[CAR]{
\label{fig3:subfig:e}
\includegraphics[width=0.28\textwidth]{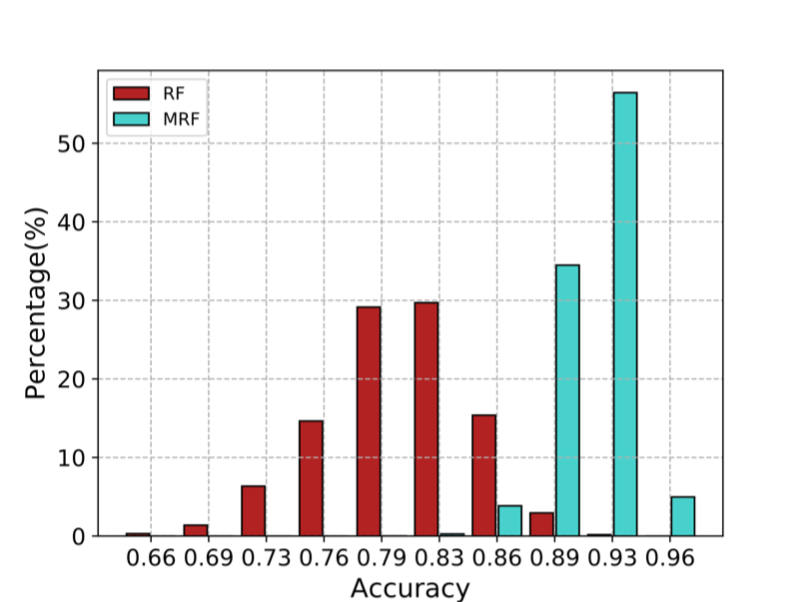}}
\subfigure[CONNECT-4]{
\label{fig3:subfig:f}
\includegraphics[width=0.28\textwidth]{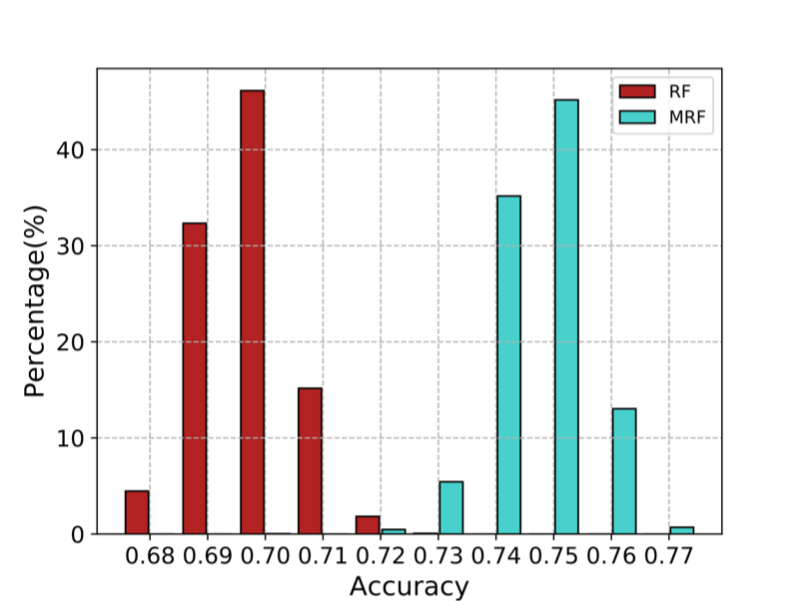}}
\caption{Distribution of performance on individual trees in MRF and RF.}
\label{fig3} %% label for entire figure
\vskip -0.1in
\end{figure*}

\subsection{Performance on Individual Trees}
We further investigated why the MRF achieves such good performance by studying the performance of MRF on individual trees. In our 10 times 10-fold cross-validation, 10,000 trees were generated for each method. We compared the distribution of prediction accuracy over those trees between the MRF and the standard RF. Figure \ref{fig3} displays the distributions for six datasets.

The tree-level performance of the MRF is generally better than that of the standard RF, which verifies the superiority of the multinomial-based random split process. However, we also have to notice that a good performance over individual trees does not necessarily lead to a good performance of a forest, since the performance of a forest may also be affected by the diversity of the trees. For example, the MRF has a significantly better tree-level performance on the CAR dataset, whereas its forest-level performance is not significantly different from the standard RF. 

Although we have not been able to make a direct connection between the overall performance and the performance on individual trees, understanding the complexity of the relationship is still meaningful. The specific connection will be explored in the future work.

\end{appendices}

\end{document}